\documentclass[journal]{IEEEtran}
\usepackage{makecell}
\usepackage[OT1]{fontenc}
\usepackage{adjustbox}
\usepackage{algorithm}
\usepackage{algorithmic}
\usepackage{amsfonts}
\usepackage{hhline}
\usepackage{amsmath}
\usepackage{amssymb}
\usepackage{amsthm}
\usepackage{arydshln}
\usepackage{balance}
\usepackage{bbm}
\usepackage{bm}
\usepackage{booktabs}
\usepackage{array}
\usepackage[font=small]{caption}
\usepackage{color}
\usepackage{cite}

\usepackage{enumitem}
\usepackage{float}
\usepackage{grffile}
\usepackage{graphicx}
\usepackage{graphics}
\usepackage{microtype}
\usepackage{multirow}
\usepackage{url}
\usepackage[table]{xcolor}
\usepackage{xparse}
\usepackage{colortbl}
\usepackage{threeparttable}
\usepackage{dsfont}
\usepackage{hyperref}
\usepackage{float}
\usepackage{graphicx}
\usepackage{multirow}
\usepackage{xcolor}
\usepackage{arydshln}
\newcommand{\fail}[1]{\textcolor{red}{#1}}

\newtheorem{theorem}{Theorem}

\theoremstyle{definition}

\usepackage{shortex}

\graphicspath{{images/}}

\allowdisplaybreaks

\begin{document}



\title{Efficient Covariance-Aware Vicinal Latent Diffusion for Continuous Conditional Image Generation}

%
\author{Xin Ding,
Shuo Huang,
Zhenglong Ding,
Sen Zhang,
Nenglun Chen,
Peibei Cao,
Yongwei Wang,
and Fei Wu
		\thanks{This work was supported in part by the National Natural Science Foundation of China (Grant No.~62306147, 62676198, 62303231, 62306144, 62676197, U24A20326, 62502442, 62501293), the Basic Research Program of Jiangsu (Grant No.~BK20241901, BK20250724), Jiangsu Supplementary Funds of National Major Talent Project (2022-2024) (Grant No.~R2024PT03), and the Startup Foundation for Introducing Talent of NUIST (Grant No.~2024r056). 
        }
		\thanks{Xin Ding, Shuo Huang, Zhenglong Ding (Corresponding Author), Sen Zhang, Nenglun Chen, and Peibei Cao are affiliated with the School of Artificial Intelligence/School of Future Technology, Nanjing University of Information Science \& Technology (NUIST), Nanjing, China (e-mail: \{dingxin, 202512492768, zlding, senzhang, chennenglun, 003970\}@nuist.edu.cn).} 
		\thanks{Yongwei Wang is affiliated with College of Media, Zhejiang University, Hangzhou, China (e-mail: yongwei.wang@zju.edu.cn).}
		\thanks{Fei Wu is affiliated with College of Artificial Intelligence, Zhejiang University, Hangzhou, China (e-mail: wufei@zju.edu.cn).}
	  }
	

\maketitle

\begin{abstract}
Continuous Conditional Generative Modeling (CCGM) aims to synthesize high-dimensional data conditioned on continuously valued scalar labels. However, existing continuous diffusion models face severe data sparsity under imbalanced and long-tailed label distributions, which can lead to visual artifacts and condition inconsistency. Moreover, inference-time classifier-free guidance and dense pixel-space integration introduce substantial sampling overhead, limiting practical deployment. To address these challenges, we propose CAViLD, an efficient latent diffusion framework for continuous conditional generation. CAViLD first constructs a purified condition-aware latent space through density-adaptive vicinal learning, auxiliary semantic regularization, and NIQE-guided negative data augmentation. It then internalizes the conditional guidance direction through an EMA teacher, enabling single-branch conditional inference, and introduces covariance-aware progressive distillation for label-dependent probability-flow ODEs, supporting sampling down to four NFEs. Experiments on RC-49, UTKFace, and Steering Angle demonstrate a favorable trade-off among generation fidelity, continuous condition consistency, and sampling efficiency.
\end{abstract}

\begin{IEEEkeywords}
Continuous Conditional Generation, Latent Diffusion Models, Negative Data Augmentation, Model-Guidance, Progressive Distillation
\end{IEEEkeywords}

\section{Introduction}
\label{sec:introduction}
Continuous conditional generative modeling (CCGM) aims to learn
high-dimensional distributions conditioned on continuous scalar
labels, such as viewpoints, ages, steering angles, and physical
parameters \cite{ding2021ccgan,ding2023ccgan,ding2026ccdm}. It supports
engineering inverse design
\cite{heyrani2021pcdgan,zhao2024ccdpm,fang2023diverse},
remote-sensing and hyperspectral generation
\cite{giry2022sar,shi2025regression,zhu2023data}, and point-cloud or
continuously controllable image synthesis
\cite{triess2022point,ding2024turning}. More broadly, recent studies have emphasized the importance of information representation, transformation, and organization in the design of intelligent systems~\cite{xu2023foundations,he2024signal, zheng2025machine}.  Unlike discrete class conditioning,
exact continuous labels are sparsely observed in finite datasets, making local
conditional distributions difficult to estimate, particularly in long-tailed
regions \cite{ding2021ccgan,ding2025imbalance,ding2026ccdm}.

Recent methods, including CCDM \cite{ding2026ccdm} and its matrix-form
extension iCCDM \cite{ding2026iccdm}, improve continuous conditional
generation through label-dependent diffusion processes and covariance
schedules. Although iCCDM reformulates DDPM-style diffusion
\cite{ho2020denoising,nichol2021improved} within the EDM paradigm
\cite{karras2022elucidating}, both methods perform iterative reverse diffusion
in pixel space, resulting in higher computational cost than latent diffusion
models \cite{rombach2022high,zhao2024ccdpm}. This limitation grows at higher
resolutions and restricts deployment.


\begin{figure}[t]
    \centering
    \includegraphics[width=\columnwidth]{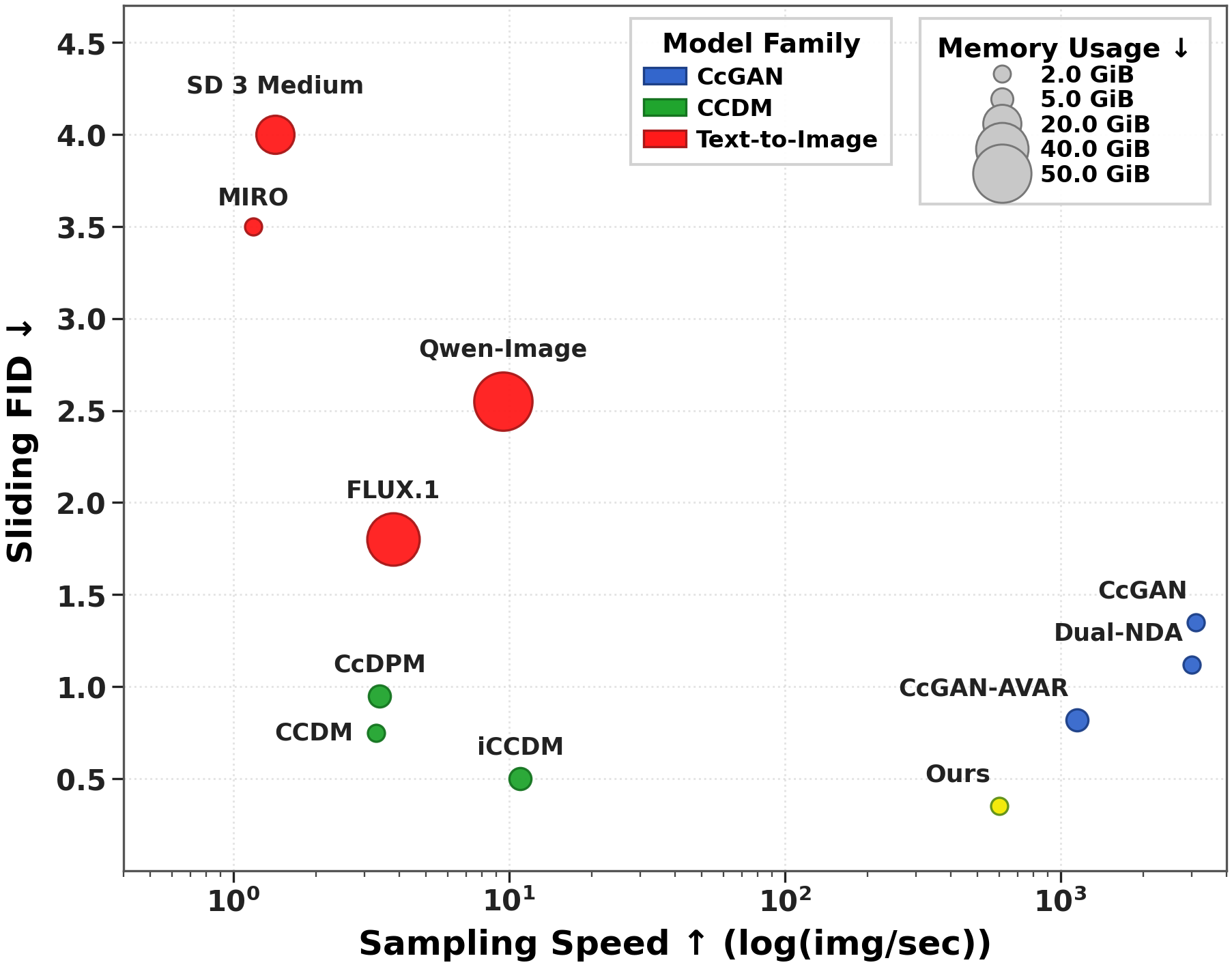}
    \caption{Quality--efficiency comparison on Steering Angle at
    $64\times64$. The axes denote sampling throughput and SFID, while marker size indicates aggregate peak GPU memory. CAViLD achieves a favorable trade-off
    among the evaluated diffusion-based CCGM methods.}
    \label{fig:benchmark}
\end{figure}

Moving continuous conditional diffusion into latent space raises three
challenges. First, standard conditional autoencoders do not explicitly address
sparse-label neighborhoods and may learn unreliable latent representations
\cite{sohn2015learning,ding2025imbalance}. Second, Classifier-Free Guidance
(CFG) requires conditional and unconditional denoiser evaluations at each
solver step, while large guidance scales may introduce artifacts or reduce
diversity
\cite{Ho2022ClassifierFreeDG,chung2025cfgpp,sadat2025eliminating}.
Third, few-step distillation methods are mainly developed for isotropic
diffusion \cite{salimans2022progressive,yin2024improved} and do not explicitly
account for the label-dependent anisotropic covariance of matrix-form
continuous diffusion \cite{ding2026iccdm}. Efficient latent CCGM therefore
requires jointly addressing representation quality, conditional guidance,
and trajectory compression.

To this end, we propose \textbf{CAViLD}, a Covariance-Aware Vicinal Latent
Diffusion framework with internalized guidance for continuous conditional
generation, building upon the matrix-form
formulation of iCCDM \cite{ding2026iccdm}. First, we learn a compact
conditional latent space using a Conditional Variational Autoencoder (CVAE)
\cite{sohn2015learning}. Density-aware vicinal learning
\cite{ding2025imbalance} improves sample coverage in sparse label regions,
while NIQE-guided Negative Data Augmentation (NDA)
\cite{ding2024turning} provides explicit supervision against low-quality
reconstruction modes.

Second, we incorporate Model-Guidance (MG) \cite{tang2025diffusion} into
matrix-form latent diffusion. During training, an exponential moving average
teacher transfers the conditional--unconditional guidance direction into the
denoising target. The trained model requires one conditional denoiser
evaluation per solver step, eliminating the additional unconditional forward
pass used by inference-time CFG.

Third, we introduce covariance-aware progressive distillation for the
probability-flow ODE \cite{song2021scorebased,chen2023probability}. A
condition-dependent precision-weighted objective accounts for the anisotropic
covariance, and step-halving \cite{salimans2022progressive} compresses the
trajectory to four NFEs. Experiments on RC-49, UTKFace, and Steering Angle
show a favorable trade-off among generation quality, label consistency, and
sampling efficiency.
Compared with prior CcGAN-, CCDM-, and iCCDM-based approaches
\cite{ding2021ccgan,ding2023ccgan,ding2024turning,ding2025imbalance,
ding2026ccdm,ding2026iccdm}, the framework jointly studies continuous
conditional latent representation, CFG-free single-branch inference, and
covariance-aware few-step sampling.

Our main contributions are summarized as follows:
\begin{itemize}
    \item \textbf{Continuous conditional latent diffusion.}
    We formulate matrix-form continuous conditional diffusion in a compact
    CVAE latent space and adapt density-aware vicinal learning and
    NIQE-guided NDA to improve representation quality under imbalanced labels.

    \item \textbf{Model-guided denoising.}
    We transfer the conditional guidance direction into the training target,
    eliminating the additional unconditional denoiser evaluation required by
    inference-time CFG.

    \item \textbf{Covariance-aware few-step distillation.}
    We derive a precision-weighted progressive distillation objective for
    label-dependent probability-flow ODEs, supporting sampling down to four NFEs.
\end{itemize}

\section{Related Work}
\label{sec:related_work}

\subsection{Continuous Conditional Image Generation}
\label{subsec:rel_ccgm}

Conditional image synthesis is well established for discrete categories
\cite{mirza2014conditional,odena2017conditional} or text prompts
\cite{rombach2022high,esser2024scaling}, whereas extending it to Continuous
Conditional Generative Modeling (CCGM) with continuously valued scalar
attributes $y\in\mathbb{R}$, such as design parameters and aspect angles
\cite{heyrani2021pcdgan,giry2022sar}, introduces a fundamental data-sparsity
problem. In a finite regression dataset, few or no observations may share an
exact target label, making direct estimation of the local conditional
distribution unreliable, especially in long-tailed regions
\cite{ding2023ccgan,ding2025imbalance,ding2026ccdm}.

CcGANs address this issue through vicinal empirical losses that borrow
information from neighboring labels
\cite{ding2021ccgan,ding2023ccgan}; Dual-NDA \cite{ding2024turning} and
CcGAN-AVAR \cite{ding2025imbalance} further improve negative supervision and
density-adaptive neighborhood construction. Diffusion alternatives, including
CcDPM \cite{zhao2024ccdpm}, CCDM \cite{ding2026ccdm}, and iCCDM
\cite{ding2026iccdm}, replace adversarial optimization with iterative
denoising. In particular, CCDM introduces label-dependent perturbation, while
iCCDM reformulates it using a matrix-form covariance
$\mathbf{\Sigma}(t,y)$ and anisotropic diffusion dynamics. Nevertheless,
long-tailed labels can still cause condition misalignment, and pixel-space
simulation remains computationally expensive. Latent modeling reduces the per-step computational cost by performing diffusion in a compact representation space \cite{rombach2022high}. However, in continuous conditional generation, latent compression must preserve both reconstruction fidelity and fine-grained condition semantics, particularly in sparsely sampled or imbalanced label regions where the learned representation may otherwise become unreliable \cite{sohn2015learning,ding2025imbalance}.

\subsection{Classifier-Free Guidance and Distillation}
\label{subsec:rel_cfg}

Conditional diffusion commonly employs classifier guidance
\cite{dhariwal2021diffusion} or Classifier-Free Guidance (CFG)
\cite{Ho2022ClassifierFreeDG}, with CFG extrapolating conditional predictions away
from an unconditional baseline. Evaluating both the conditional denoiser
$\mathcal{D}_\theta(z_t,t,y)$ and its unconditional counterpart
$\mathcal{D}_\theta(z_t,t,\emptyset)$ at each integration point approximately
doubles the network evaluations. Moreover, large CFG scales may move sampling trajectories away from the
learned manifold, producing mode collapse, oversaturation, artifacts, or
diversity loss \cite{chung2025cfgpp,sadat2025eliminating}. 

Model-Guidance (MG) \cite{tang2025diffusion} avoids the inference-time
unconditional branch by internalizing the conditional--unconditional direction
into the training objective. Sampling acceleration has also been studied
through implicit sampling \cite{song2021denoising}, high-order guided solvers
\cite{lu2022dpm_plus}, consistency modeling \cite{song23consistency}, and
progressive distillation \cite{salimans2022progressive}. The latter trains a
student to reproduce two consecutive teacher transitions with one step.
However, these methods are generally developed for scalar or isotropic noise
schedules and do not account for the continuous, condition-dependent
anisotropic geometry $\mathbf{\Sigma}(t,y)$. This motivates a
covariance-aware model-guided distillation formulation.



\section{Method}
\label{sec:method}
To improve the generation quality and sampling efficiency of
iCCDM~\cite{ding2026iccdm}, we extend its matrix-form continuous conditional
diffusion framework from pixel space to a compact latent space. We retain
the label-dependent covariance formulation, matrix-form probability-flow
dynamics, covariance embedding, and denoiser preconditioning of iCCDM, while
adapting its density-aware hard-vicinity principle to latent-space
construction. Relative to this backbone, CAViLD introduces three extensions:
a purified condition-aware CVAE latent space, internalized model guidance for
single-branch conditional inference, and covariance-aware progressive
distillation for few-step sampling. Specifically, the CVAE is first trained
and frozen; an EMA teacher then transfers the conditional--null guidance
direction into the latent denoiser; finally, the resulting single-branch
model is progressively distilled using covariance-aware endpoint matching.
Figure~\ref{fig:overview} summarizes the framework, with detailed derivations
provided in Appendices~\ref{app:vicinal_analysis}--\ref{app:distillation_proof}.


\begin{figure*}[t]
    \centering
    \includegraphics[
        width=\textwidth,
        trim=0cm 2cm 0cm 3.0cm,
        clip
    ]{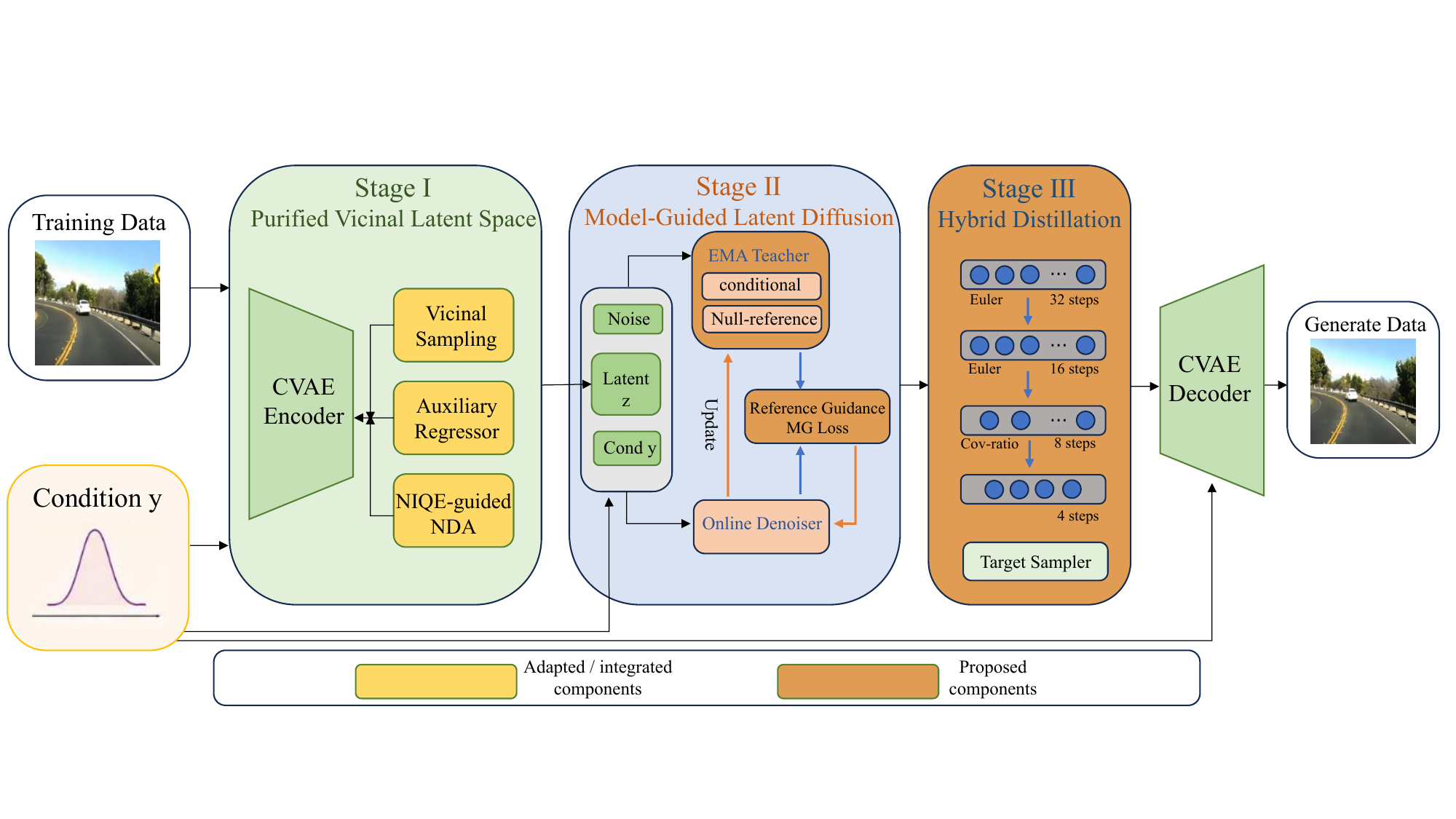}
    \caption{Overview of CAViLD. Yellow blocks denote adapted or
integrated components, while orange blocks denote the proposed
components. Unhighlighted blocks represent standard pipeline
elements.}
    \label{fig:overview}
\end{figure*}

\subsection{A Purified Continuous Conditional Latent Space}
\label{subsec:cvae}

Let $\mathcal{S}=\{(x_i,y_i)\}_{i=1}^{N_{\rm data}}$, where
$x_i\in\mathbb{R}^{C\times H\times W}$ and
$y_i\in\mathbb{R}$. A conditional encoder
$q_{\phi}(z\mid x,y)$ and decoder ${G}_{\psi}(z,y)$ are trained first
and then frozen. We use the reparameterization
$z=\mu_{\phi}(x,y)+s_{\phi}(x,y)\odot\xi$,
$\xi\sim\mathcal{N}(0,I)$, where
$s_{\phi}(x,y)=\exp\!\left(\tfrac12\log v_{\phi}(x,y)\right)$ is the posterior standard deviation. The decoder $G_\psi$ parameterizes the conditional
likelihood $p_\psi(x\mid z,y)$.
The base CVAE objective follows the standard conditional ELBO~\cite{sohn2015learning} formulation. However, conventional minibatch
training under an imbalanced continuous label distribution undersamples
sparse regions. We therefore use density-adaptive vicinal resampling to
modify the empirical frequency with which observed training pairs are
presented, without changing the per-sample ELBO form.

\textbf{Condition modulation.}
The scalar label is embedded as
$e_y=\operatorname{MLP}(y).$
After normalization in each residual block, Zero-Initialized
Decoupled Feature Modulation is applied as
\[
\widehat h_\ell
=
h_\ell\odot(1+\gamma_\ell)+\beta_\ell,
\qquad
\begin{bmatrix}
\gamma_\ell\\
\beta_\ell
\end{bmatrix}
=
W_\ell^m e_y+b_\ell^m.
\]
We initialize $W_\ell^m=0$ and $b_\ell^m=0$, such that
$\gamma_\ell=\beta_\ell=0$ at initialization and the modulation
reduces to an identity mapping. The label-dependent scale and
shift are learned during training, improving
optimization stability in sparse label regions.

\textbf{Adaptive vicinal resampling.}
Exact-label observations are rare for continuous conditions.
We therefore sample a base label uniformly from the set of distinct
observed labels, denoted by $\mathcal{Y}_{\rm train}$, and perturb it
with Gaussian noise:
\[
\begin{aligned}
&y_0
\sim
\operatorname{Unif}(\mathcal{Y}_{\rm train}),
\\
&\eta_{\rm KDE}
\sim
\mathcal{N}(0,\sigma_{\rm KDE}^{2}),
\qquad
y^\star
=
y_0+\eta_{\rm KDE}.
\end{aligned}
\]
$\sigma_{\rm KDE}$ is determined adaptively following
CcGAN-AVAR~\cite{ding2025imbalance}. Following the adaptive hard-support vicinity strategy, we construct a
symmetric neighborhood around each query label $y^\star$. For a
candidate radius $r \geq 0$, define
\[
\mathcal{N}_{r}(y^\star)
=
\left\{
i \in \{1,\ldots,N_{\rm data}\} :
|y_i-y^\star|\leq r
\right\},
\]
which contains the source examples whose observed labels lie within
distance $r$ of $y^\star$. Given a prescribed minimum neighborhood size
$N_{\rm AV}$, we choose the smallest radius that contains at least
$N_{\rm AV}$ source examples:
\begin{equation}
\kappa(y^\star)
=
\min
\left\{
r\geq0:
\left|
\mathcal{N}_{r}(y^\star)
\right|
\geq
N_{\rm AV}
\right\}.
\label{eq:adaptive_radius}
\end{equation}
The resulting adaptive neighborhood is then defined as
\[
\mathcal{N}(y^\star)
=
\mathcal{N}_{\kappa(y^\star)}(y^\star).
\]
Thus, the neighborhood contracts in densely sampled label regions and
expands in sparse regions so as to maintain a sufficient local source
population.
The corresponding hard vicinal indicator is
\begin{equation}
W_{i,y^\star}
=
\mathds{1}_{\{
|y_i-y^\star|
\leq
\kappa(y^\star)
\}}.
\label{eq:vicinal_weight}
\end{equation}
Since $W_{i,y^\star}$ is binary, we define
$\bar W_{i,y^\star}
=
W_{i,y^\star}/|\mathcal N(y^\star)|$
as the uniform sampling probability over the adaptive neighborhood.
Thus, source examples inside the neighborhood receive equal sampling
probability, whereas examples outside the neighborhood receive zero
probability.

The query label $y^\star$ is used only for neighborhood construction
and source resampling. The source pair retains its observed
label $y_i$, which is used as the condition for the CVAE:
\begin{equation}
z^{(i)}
\sim
q_\phi(z\mid x_i,y_i),
\qquad
\widehat{x}_i
=
G_\psi(z^{(i)},y_i).
\label{eq:cvae_source_pair}
\end{equation}

We use $\mathbb E_{y^\star}$ to denote expectation over the
query-label sampling procedure defined above. The reconstruction
and KL objectives are
\begin{align}
\mathcal L_{\rm rec}^{v}
&=
\mathbb E_{y^\star}
\sum_{i=1}^{N_{\rm data}}
\bar W_{i,y^\star}
\,
\mathbb E_{q_\phi(z\mid x_i,y_i)}
\Big[
\mathcal L_{\rm MSE}
\bigl(x_i,G_\psi(z,y_i)\bigr)
\nonumber\\
&\hspace{2.8cm}
+
\lambda_{\rm perc}
\mathcal L_{\rm perc}
\bigl(x_i,G_\psi(z,y_i)\bigr)
\Big],
\label{eq:vicinal_reconstruction}
\\
\mathcal L_{\rm KL}^{v}
&=
\mathbb E_{y^\star}
\sum_{i=1}^{N_{\rm data}}
\bar W_{i,y^\star}
\operatorname{KL}\!\left(
q_\phi(z\mid x_i,y_i)
\,\middle\|\,
\mathcal N(0,I)
\right).
\label{eq:vicinal_kl}
\end{align}

\begin{theorem}[Bias--variance trade-off]
\label{thm:vicinal_bound}
Under the assumptions detailed in
Appendix~\ref{app:vicinal_analysis}, the mean-squared error of the
vicinal estimator is bounded by the sum of a neighborhood-induced
bias term and an inverse-effective-sample-size variance term.
\end{theorem}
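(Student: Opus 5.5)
We read the theorem as a statement about the vicinal estimator
$\widehat{\mathcal L}(y^\star)=\sum_i \bar W_{i,y^\star}\,\ell(x_i,y_i)$,
where $\ell(x_i,y_i)$ is the per-sample ELBO term (reconstruction plus KL) appearing in \eqref{eq:vicinal_reconstruction}--\eqref{eq:vicinal_kl}. Its target is the exact-label conditional risk
$\mathcal L(y^\star)=\mathbb E[\ell(x,y)\mid y=y^\star]$.
We make the following assumptions explicit, as we expect them to be stated in Appendix~\ref{app:vicinal_analysis}:
\begin{itemize}
\item[(A1)] The pairs $(x_i,y_i)$ are i.i.d.
\item[(A2)] The conditional mean $m(y)=\mathbb E[\ell(x,y)\mid y]$ is $L$-Lipschitz in $y$.
\item[(A3)] The conditional variance $v(y)=\operatorname{Var}[\ell(x,y)\mid y]$ is bounded by $\sigma^2_{\max}$.
\end{itemize}
The plan is to argue conditionally on the observed labels $\{y_i\}$ and on $y^\star$, so that the neighborhood $\mathcal N(y^\star)$ and the weights $\bar W_{i,y^\star}$ are fixed. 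We then apply the standard decomposition
$\mathbb E[(\widehat{\mathcal L}-\mathcal L)^2]=(\mathbb E\widehat{\mathcal L}-\mathcal L)^2+\operatorname{Var}(\widehat{\mathcal L})$.

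\textbf{Bias.} Conditionally on the labels, $\mathbb E\widehat{\mathcal L}=\sum_i\bar W_{i,y^\star}m(y_i)$. Since the weights sum to one, the bias equals $\sum_i\bar W_{i,y^\star}\bigl(m(y_i)-m(y^\star)\bigr)$. By (A2) and the hard-support definition \eqref{eq:vicinal_weight}, every index with nonzero weight satisfies $|y_i-y^\star|\le\kappa(y^\star)$. The bias is therefore at most $L\,\kappa(y^\star)$, and squaring gives the neighborhood-induced term $L^2\kappa(y^\star)^2$.

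\textbf{Variance.} Under (A1), the $\ell(x_i,y_i)$ are conditionally independent given the labels. Hence
\[
\operatorname{Var}(\widehat{\mathcal L})=\sum_i\bar W_{i,y^\star}^2 v(y_i)\le\sigma^2_{\max}\sum_i\bar W_{i,y^\star}^2=\sigma^2_{\max}/N_{\rm eff}(y^\star).
\]
Here $N_{\rm eff}=(\sum_i\bar W_i)^2/\sum_i\bar W_i^2$ is the Kish effective sample size. For uniform hard weights it equals $|\mathcal N(y^\star)|$, which by \eqref{eq:adaptive_radius} is at least $N_{\rm AV}$. This gives the inverse-effective-sample-size term, bounded by $\sigma^2_{\max}/N_{\rm AV}$.

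\textbf{Combining.} Adding the two pieces yields the conditional bound
$\mathrm{MSE}\le L^2\kappa(y^\star)^2+\sigma^2_{\max}/|\mathcal N(y^\star)|$.
Taking expectations over $y^\star$, that is, the uniform base label plus KDE jitter $\mathbb E_{y^\star}$, and over the labels then gives
$\mathbb E[\mathrm{MSE}]\le L^2\,\mathbb E\kappa^2+\sigma^2_{\max}/N_{\rm AV}$.
This makes the trade-off visible: increasing $N_{\rm AV}$ enlarges $\kappa$, raising the bias, while shrinking the variance.

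\textbf{Main obstacle.} The hard part is that $\kappa(y^\star)$ is a data-dependent order statistic, so the weights are not independent of the samples. Conditioning on all labels resolves this cleanly for the $x$-randomness, because the weights depend only on the $y_i$. I would therefore state the theorem in this conditional form, which needs no further argument. If an unconditional rate is desired, I would additionally assume a label density bounded below by $p_{\min}$ near $y^\star$. Then $\kappa\lesssim N_{\rm AV}/(2p_{\min}N_{\rm data})$ holds with high probability, by a binomial/Chernoff bound on $|\mathcal N_r(y^\star)|$. In sparse tails this bound degrades, which is exactly the regime the adaptive radius targets.
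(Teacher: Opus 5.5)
Your proposal is correct and follows essentially the same argument as the paper: you condition on the observed labels, split the MSE into squared bias plus variance, and bound the bias by $L\sum_i \bar W_{i,y^\star}|y_i-y^\star| \le L\,\kappa(y^\star)$ using the Lipschitz assumption and $\sum_i \bar W_{i,y^\star}=1$. You then bound the variance, via conditional independence, by $\sigma^2/N_{\rm eff}(y^\star)=\sigma^2/|\mathcal N(y^\star)|\le \sigma^2/N_{\rm AV}$. The one assumption you use implicitly but should state, as the paper does, is that the parameters $\Theta$ are fixed and independent of the evaluated samples (with the loss averaged over the reparameterization noise), so that each $\ell(x_i,y_i)$ depends only on its own sample and the unbiasedness and independence steps are justified.
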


The bound favors local neighborhoods in dense regions and wider
label neighborhoods in sparse regions; detailed assumptions and proof
are given in Appendix~\ref{app:vicinal_analysis}.

\textbf{Semantic anchoring and negative purification.}
For each source pair selected by adaptive vicinal resampling,
a frozen regressor $R_{\omega_R}$ penalizes semantic deviation
between the reconstruction and its observed label:
\begin{equation}
\mathcal L_{\rm aux}
=
\mathbb E_{\substack{
y^\star,\;
i\sim\operatorname{Unif}(\mathcal N(y^\star))
}}
\left[
\left|
R_{\omega_R}(\widehat x_i)-y_i
\right|
-
\epsilon_{\rm aux}
\right]_{+},
\label{eq:auxiliary_regression}
\end{equation}
where $[u]_+=\max(u,0),$ $\epsilon_{\rm aux}$ is a tolerance margin and
$R_{\omega_R}$ remains frozen throughout CVAE training.

The negative replay buffer is constructed and periodically refreshed
during CVAE training using NIQE-based quality filtering. Specifically,
low-quality reconstructions identified by NIQE ranking are used to
update the negative buffer
\[
Q_{\rm bad}
=
\{(\tilde{x}_j,\tilde{y}_j)\}_{j=1}^{N_{\rm bad}}.
\]
During each discriminator update, a minibatch is uniformly sampled
from $Q_{\rm bad}$ and treated as additional negative examples.

To distinguish adversarial reweighting from the
indicator $W_{i,y^\star}$, we define the distance-aware confidence weight
\begin{equation}
\alpha_{i,y^\star}
=
\exp\!\left[
-\frac{(y_i-y^\star)^2}
{\kappa(y^\star)^2}
\right].
\label{eq:adversarial_weight}
\end{equation}
With the logistic losses
$
\ell_1(s)=-\log\rho(s),
\qquad
\ell_0(s)=-\log[1-\rho(s)],
$
where $\rho(s)=1/(1+\exp(-s))$ denotes the sigmoid function.
Hereafter, $\mathbb E_{y^\star,i}$ denotes expectation over the
query-label sampling procedure and
$i\sim\operatorname{Unif}(\mathcal N(y^\star))$.
The discriminator objectives are
\begin{align}
\mathcal L_D^{v}
&=
\mathbb E_{y^\star,i}
\left[
\alpha_{i,y^\star}
\left\{
\ell_1\bigl(D_\omega(x_i,y_i)\bigr)
+
\ell_0\bigl(
D_\omega(\operatorname{sg}[\widehat x_i],y_i)
\bigr)
\right\}
\right],
\label{eq:vicinal_discriminator}
\\
\mathcal L_D
&=
\mathcal L_D^{v}
+
\lambda_{\rm nda}
\mathbb E_{(\widetilde x,\widetilde y)\sim Q_{\rm bad}}
\left[
\ell_0\bigl(
D_\omega(\widetilde x,\widetilde y)
\bigr)
\right].
\label{eq:nda_discriminator}
\end{align}
The CVAE objective is
\begin{equation}
\begin{aligned}
\mathcal L_{\rm CVAE}
={}&
\mathcal L_{\rm rec}^{v}
+
\lambda_{\rm KL}\mathcal L_{\rm KL}^{v}
+
\lambda_{\rm aux}\mathcal L_{\rm aux}
\\
&+
\lambda_{\rm adv}
\mathbb E_{y^\star,i}
\left[
\alpha_{i,y^\star}
\ell_1\!\left(
D_\omega(\widehat x_i,y_i)
\right)
\right].
\end{aligned}
\label{eq:cvae_total}
\end{equation}
During alternating optimization, $D_\omega$ is updated using
Eq.~\eqref{eq:nda_discriminator}, whereas $(\phi,\psi)$ are updated
using Eq.~\eqref{eq:cvae_total} with $D_\omega$ fixed.
The auxiliary regressor $R_{\omega_R}$ remains frozen.
Since no generator-side NDA repulsion is used, $Q_{\rm bad}$
affects the CVAE only indirectly through the adversarial objective.

\subsection{Matrix-Form Model-Guided Latent Diffusion}
\label{subsec:ldm_iccdm}

\textbf{Condition-dependent corruption.}
Following iCCDM~\cite{ding2026iccdm}, we adopt its
condition-dependent covariance embedding in the latent space.
The embedding network $h_\nu$ is trained using the procedure of
iCCDM, with output dimension adjusted to the latent
dimension $d_z=C_zH_zW_z$, and is frozen during diffusion training.
We define the positive covariance embedding as
\[
a_\nu(y)
=
\exp\!\left[-h_\nu(y)\right]
\in\mathbb R_{>0}^{d_z},
\]
and use the diagonal covariance
\begin{equation}
K_\sigma(y)
=
\sigma^2 I
+
\lambda_{\rm cov}\sigma
\operatorname{Diag}\!\left(a_\nu(y)\right).
\label{eq:latent_covariance}
\end{equation}

The Stage-II training pairs follow the adaptive vicinal resampling
defined in Section~\ref{subsec:cvae}. Specifically, a query label
$y^\star$ is generated as described previously, and a source index
is sampled uniformly from its adaptive neighborhood:
\[
i
\sim
\operatorname{Unif}\!\left(\mathcal N(y^\star)\right).
\]
The selected source image $x_i$ is encoded by the frozen CVAE using
its observed label $y_i$. For diffusion modeling, we rescale the
sampled CVAE latent by a scalar $c>0$ to approximately normalize its
empirical scale, define the scaled clean latent as
\begin{equation}
z_0
=
c\left[
\mu_\phi(x_i,y_i)
+
s_\phi(x_i,y_i)\odot\xi
\right],
\qquad
\xi\sim\mathcal N(0,I).
\label{eq:clean_latent}
\end{equation}

Here, the observed label $y_i$ is used for source encoding, whereas
the query label $y^\star$ determines the diffusion condition and the
label-dependent corruption covariance. Hereafter, $y$ denotes the
Stage-II query label $y^\star$, while $y_i$ denotes the observed label
of the sampled source.

The above sampling procedure induces the vicinal clean-latent
distribution
\[
p_0^v(z_0\mid y)
=
\sum_{i=1}^{N_{\rm data}}
\bar W_{i,y}\,
q_{\phi,c}(z_0\mid x_i,y_i),
\]
where $q_{\phi,c}$ denotes the distribution of the scaled latent
$z_0=cz$ induced by $q_\phi(z\mid x_i,y_i)$.

The forward perturbation is
\begin{equation}
z_\sigma
=
z_0
+
K_\sigma(y)^{1/2}\epsilon,
\qquad
\epsilon\sim\mathcal N(0,I).
\label{eq:forward_perturbation}
\end{equation}

\textbf{Matrix preconditioning.}
We further adopt the matrix preconditioning of
iCCDM~\cite{ding2026iccdm}, replacing the image-space covariance
with the latent covariance in Eq.~\eqref{eq:latent_covariance}.
For matrix preconditioning, we adopt $S_z=s_z^2I$ with $s_z=0.5$
for the scaled latent representation. The denoiser is parameterized as
\begin{equation}
D_\theta(z_\sigma,\sigma,y)
=
C_{\rm skip}z_\sigma
+
C_{\rm out}
F_\theta
\left(
C_{\rm in}z_\sigma,
\tfrac14\log\sigma,
e_y
\right),
\label{eq:matrix_preconditioning}
\end{equation}
where
\begin{align}
\begin{aligned}
C_{\rm in}
&=\bigl(s_z^2I+K_\sigma(y)\bigr)^{-1/2},\\
C_{\rm skip}
&=s_z^2\bigl(s_z^2I+K_\sigma(y)\bigr)^{-1},\\
C_{\rm out}
&=s_zK_\sigma(y)^{1/2}
  \bigl(s_z^2I+K_\sigma(y)\bigr)^{-1/2}.
\end{aligned}
\label{eq:preconditioning_coefficients}
\end{align}
For $\sigma>0$, the preconditioning precision is
\begin{equation}
W_{\sigma,y}
=C_{\rm out}^{-2}
=s_z^{-2}I+K_\sigma(y)^{-1}.
\label{eq:precision_weight}
\end{equation}
Matrix operations are element-wise because
$K_\sigma(y)$ is diagonal. For the null prediction, $F_\theta$ receives the normalized noisy
latent, noise level, and null embedding $e_\varnothing$.
The requested geometry is retained through external preconditioning;
neither $y$ nor $K_\sigma(y)$ is a separate input to $F_\theta$.

\textbf{Internalizing model guidance.}
Condition dropout with probability $p_{\rm drop}$ is used to jointly
train conditional and null-conditioned predictions. To construct the
model-guidance target, the EMA teacher is evaluated twice on the same
preconstructed noisy latent $z_\sigma$, at the same noise level
$\sigma$ and requested label $y$: once through the conditional branch
and once through the null-conditioning mechanism. We denote the latter
prediction by
$
D_{\bar\theta}^{(y)}(z_\sigma,\sigma,\emptyset).
$
Here $\emptyset$ indicates that the explicit condition is suppressed,
whereas the superscript $(y)$ indicates that the reference prediction
is associated with the same requested label and noisy state as the
conditional prediction.
For a fixed requested label $y$, define
\begin{equation}
\begin{aligned}
 D_c
 &=
 D_{\bar\theta}(z_{\sigma},\sigma,y),\\
 D_{\emptyset}^{(y)}
 &=
 D_{\bar\theta}^{(y)}(z_{\sigma},\sigma,\emptyset),\\
 \Delta_{\bar\theta}^{\rm ref}
 &=
 D_c-D_{\emptyset}^{(y)}.
\end{aligned}
\label{eq:mg_score}
\end{equation}

Following the operational two-branch guidance rule used by matrix-form
continuous conditional diffusion, we define
\begin{equation}
 D^{\rm guide}
 =
 D_c+w_{\rm MG}\Delta_{\bar\theta}^{\rm ref}
 =
 (1+w_{\rm MG})D_c-w_{\rm MG}D_{\emptyset}^{(y)}.
 \label{eq:mg_clean_prediction}
\end{equation}
Under the CFG parameterization
$D_{\rm CFG}=D_{\emptyset}+\gamma(D_c-D_{\emptyset})$,
$\gamma=1+w_{\rm MG}$ matches the reference two-branch
coefficients, without implying exact CFG equivalence of the
trained single branch. For label-independent corruption and a
globally label-marginal null denoiser,
Eq.~\eqref{eq:mg_clean_prediction} recovers the standard CFG
clean-prediction identity. Under label-dependent corruption,
$D_{\emptyset}^{(y)}$ instead serves as a null-conditioned reference
under the fixed geometry $K_{\sigma}(y)$, rather than the exact
global-marginal denoiser. We train the conditional branch to absorb
this reference guidance direction.
Let $\bar\theta$ be an exponential moving average (EMA) of $\theta$.
With this evolving teacher, $w_{\rm MG}$ is an empirical training
coefficient. After warm-up, we construct the target
\begin{equation}
 z_0^{\rm MG}
 =
 z_0+w_{\rm MG}
 \operatorname{sg}\!\left[
 D_{\bar\theta}(z_{\sigma},\sigma,y)
 -
 D_{\bar\theta}^{(y)}(z_{\sigma},\sigma,\emptyset)
 \right],
 \label{eq:mg_target}
\end{equation}
and set $z_0^{\rm MG}=z_0$ during warm-up. The clean sample anchors the target,
the EMA difference supplies a reference-guidance direction, and stop‑gradient blocks gradient flow through the EMA teacher for every online‑model update.

For a positive-definite matrix $W$, we use the weighted norm
$\|u\|_{W}^{2}=u^{\mathsf T}Wu$. Let $\mathbb E_c$ and $\mathbb E_\emptyset$ denote expectations
under the corruption in Eq.~\eqref{eq:forward_perturbation}
and $z_\sigma=z_0+\sigma\epsilon$, respectively, including
the remaining Stage-II sampling.
Both branches retain the requested preconditioning.
The training objective is
\begin{equation}
\begin{aligned}
\mathcal L_{\rm diff}
={}&(1-p_{\rm drop})\mathbb E_c
\Bigl[
\bigl\|
D_\theta(z_\sigma,\sigma,y)-z_0^{\rm MG}
\bigr\|_{W_{\sigma,y}^{1/2}}^2
\Bigr]\\
&+p_{\rm drop}\mathbb E_\emptyset
\Bigl[
(s_z^{-2}+\sigma^{-2})^{1/2}\\
&\qquad\qquad\cdot
\bigl\|
D_\theta^{(y)}(z_\sigma,\sigma,\emptyset)
-z_0^{\rm MG}
\bigr\|_2^2
\Bigr].
\end{aligned}
\label{eq:model_guided_loss}
\end{equation}




The model-guidance
coefficient is activated after warm-up according to
\begin{equation}
w_{\rm MG}(s)
=
\begin{cases}
0, & s\leq s_{\rm warm},\\
w_{\rm MG}, & s>s_{\rm warm},
\end{cases}
\label{eq:mg_warmup}
\end{equation}
where $s$ denotes the optimization step. Before $s_{\rm warm}$,
the model is trained with the clean target $z_0$; afterwards, the
EMA prediction difference is incorporated into
$z_0^{\rm MG}$. Warm-up delays reference guidance until after initial
clean-target training.

Eq.~\eqref{eq:model_guided_loss} trains both online branches
toward the same model-guided target using their respective inputs.
At inference, only the conditional prediction is retained.

\textbf{Single-branch model-guided sampling.}
For the corruption process in Eq.~\eqref{eq:forward_perturbation}, define the clean-latent posterior mean as
\[
D^{*}(z_{\sigma},\sigma,y)
=
\mathbb{E}[z_0\mid z_{\sigma},\sigma,y].
\]
The corresponding conditional PF-ODE
\cite{song2021scorebased,chen2023probability} is
\begin{equation}
 \frac{d z_{\sigma}}{d\sigma}
 =
 B_{\sigma,y}
 \bigl[
 z_{\sigma}-D^{*}(z_{\sigma},\sigma,y)
 \bigr],
 \quad
 B_{\sigma,y}
 =
 \tfrac12K'_{\sigma}(y)K_{\sigma}(y)^{-1}.
 \label{eq:latent_pfode}
\end{equation}
After training, let $D_{\bar\theta}^{\rm MG}$ denote the
conditional prediction of the EMA denoiser obtained after
model-guided training. Replacing $D^{*}$ by this prediction yields the guided sampling ODE
\begin{equation}
\begin{aligned}
\frac{d z_{\sigma}}{d\sigma}
&=
B_{\sigma,y}
\left[
z_{\sigma}
-D_{\bar{\theta}}^{\mathrm{MG}}(z_{\sigma},\sigma,y)
\right].
\end{aligned}
\label{eq:latent_guided_ode}
\end{equation}
Equation~\eqref{eq:latent_guided_ode} does not claim exact preservation of the
original conditional marginals after guidance.
For $K_{\sigma}=\sigma^2I$, $B_{\sigma,y}=I/\sigma$, recovering the standard
EDM ODE. At a sufficiently large $\sigma_{\max}$, the corruption covariance
dominates the clean-latent covariance, motivating the boundary approximation
\[
 p_{\sigma_{\max}}(z\mid y)
 \approx
 \mathcal{N}\!\left(0,K_{\sigma_{\max}}(y)\right).
\]
We therefore initialize
$z_{\sigma_{\max}}\sim
\mathcal{N}(0,K_{\sigma_{\max}}(y))$ and integrate
Eq.~\eqref{eq:latent_guided_ode} backward toward the clean boundary
$\sigma=0$, without evaluating the vector field at the singular endpoint.
The resulting latent is decoded as
$\widehat x=G_{\psi}(z_0/c,y)$.

Each vector-field evaluation uses exactly one conditional denoiser call.
For an $M$-evaluation solver, standard two-branch CFG requires approximately
$2M$ denoiser evaluations, whereas the model-guided field requires $M$.
This saving is orthogonal to latent-space compression: the CVAE reduces the
cost of each evaluation, while model guidance removes the additional
null-conditioned forward pass at each integration point.

\subsection{Hybrid Step-Halving Distillation}
\label{subsec:distillation}

Model guidance removes the unconditional inference pass, but a conventional
ODE solver still evaluates the conditional denoiser many times. We therefore
distill the already model-guided, single-branch teacher; no unconditional
prediction is reintroduced during distillation.

\paragraph{Euler-based distillation from 32 to 8 steps}
Let
\[
\sigma_0=\sigma_{\max}>
\sigma_1>\cdots>\sigma_N=0
\]
be the $N$-step teacher schedule, where $N\in\{32,16\}$ for the
two distillation stages. We sample an index
$k\in\{0,2,\ldots,N-2\}$ and distill the two teacher transitions
from $\sigma_k$ to $\sigma_{k+2}$ into one student transition.

We train on forward-corrupted states $z_{\sigma_k}$ constructed by
Eq.~\eqref{eq:forward_perturbation} at $\sigma=\sigma_k$, where
$z_0$ is obtained from the sampled source pair.
At each optimization step, an EMA $N$-step teacher, evaluated under
stop-gradient, applies two fine solver transitions to produce
\begin{equation}
z_{\sigma_{k+2}}^{T}
=
\Phi_{\bar\theta_N}
\left(
z_{\sigma_k};
\sigma_k\!\rightarrow\!\sigma_{k+1}
\!\rightarrow\!\sigma_{k+2},
y
\right).
\label{eq:teacher_endpoint}
\end{equation}

Let
\[
\Delta_k=\sigma_{k+2}-\sigma_k<0,
\qquad
B_k=B_{\sigma_k,y}.
\]
We invert one coarse Euler transition of
Eq.~\eqref{eq:latent_guided_ode} and obtain the endpoint-matching denoised-prediction target
\begin{equation}
\begin{aligned}
D_k^{\dagger}
&=
z_{\sigma_k}
-
B_k^{-1}
\frac{
z_{\sigma_{k+2}}^{T}-z_{\sigma_k}
}{
\Delta_k
}
\\
&=
z_{\sigma_k}
-
2K_{\sigma_k}(y)
\bigl(K'_{\sigma_k}(y)\bigr)^{-1}
\frac{
z_{\sigma_{k+2}}^{T}-z_{\sigma_k}
}{
\Delta_k
}.
\end{aligned}
\label{eq:distilled_clean_target}
\end{equation}

\begin{theorem}[Euler endpoint consistency]
\label{thm:endpoint_consistency}
For $\sigma_k>0$, the target $D_k^\dagger$ in
Eq.~\eqref{eq:distilled_clean_target} is the unique prediction whose
one-step Euler transition reaches the two-step teacher endpoint
$z_{\sigma_{k+2}}^T$.
\end{theorem}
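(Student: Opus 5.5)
The plan is to write the one-step Euler transition of the guided ODE in Eq.~\eqref{eq:latent_guided_ode} as an affine map of the prediction, and then show this map is invertible whenever $\sigma_k>0$.

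\textbf{Step 1: the Euler map.} For a candidate prediction $D\in\mathbb R^{d_z}$ at state $z_{\sigma_k}$, one coarse Euler step from $\sigma_k$ to $\sigma_{k+2}$ gives
\[
E_k(D)=z_{\sigma_k}+\Delta_k B_k\bigl(z_{\sigma_k}-D\bigr).
\]
Reaching the endpoint means $E_k(D)=z^{T}_{\sigma_{k+2}}$. This is equivalent to the linear system
\[
\Delta_k B_k D=\Delta_k B_k z_{\sigma_k}-\bigl(z^{T}_{\sigma_{k+2}}-z_{\sigma_k}\bigr).
\]

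\textbf{Step 2: invertibility of $\Delta_k B_k$.} This is the only nontrivial point. Since $\Delta_k=\sigma_{k+2}-\sigma_k<0$, the scalar factor is nonzero. It remains to show that $B_k=\tfrac12 K'_{\sigma_k}(y)K_{\sigma_k}(y)^{-1}$ is invertible.

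From Eq.~\eqref{eq:latent_covariance}, $K_\sigma(y)$ is diagonal with entries $\sigma^2+\lambda_{\rm cov}\sigma\,a_\nu(y)_j$. These are strictly positive for $\sigma>0$, because $a_\nu(y)_j=\exp[-h_\nu(y)_j]>0$ and $\lambda_{\rm cov}\ge 0$.

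Differentiating in $\sigma$ gives the diagonal entries $2\sigma+\lambda_{\rm cov}a_\nu(y)_j$, which are also strictly positive for $\sigma>0$. Hence $B_k$ is a diagonal matrix with strictly positive entries, and therefore invertible. Its inverse is $B_k^{-1}=2K_{\sigma_k}(y)\bigl(K'_{\sigma_k}(y)\bigr)^{-1}$, where diagonal matrices commute.

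I expect the main care to lie in stating this positivity explicitly. It is exactly where $\sigma_k>0$ is needed: at $\sigma=0$, $K_\sigma(y)$ is singular and $B_{\sigma,y}$ is undefined.

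\textbf{Step 3: solve and conclude uniqueness.} Multiply the system by $(\Delta_k B_k)^{-1}=\Delta_k^{-1}B_k^{-1}$ to obtain
\[
D=z_{\sigma_k}-B_k^{-1}\frac{z^{T}_{\sigma_{k+2}}-z_{\sigma_k}}{\Delta_k}.
\]
This is the first line of Eq.~\eqref{eq:distilled_clean_target}. Substituting the expression for $B_k^{-1}$ gives the second line.

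Existence follows by plugging $D_k^\dagger$ back into $E_k$. Uniqueness follows because $E_k$ is an affine map with invertible linear part, hence injective: $E_k(D)=E_k(D')$ implies $\Delta_kB_k(D-D')=0$, so $D=D'$.
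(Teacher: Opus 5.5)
Your proposal is correct and follows essentially the same argument as the paper. Both write the coarse Euler step as the affine map $z_{\sigma_k}+\Delta_kB_k(z_{\sigma_k}-D)$, get nonsingularity from $\Delta_k\neq0$ and the positive diagonal structure of $K_{\sigma_k}(y)$ and $K'_{\sigma_k}(y)$, and solve uniquely for $D_k^\dagger$; your explicit check that the entries $\sigma^2+\lambda_{\rm cov}\sigma\,a_\nu(y)_j$ and $2\sigma+\lambda_{\rm cov}a_\nu(y)_j$ are positive just spells out what the paper asserts in one line.
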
The proof is provided in Appendix~\ref{app:distillation_proof}.

The $N/2$-step student is initialized from the preceding-stage
checkpoint and minimizes
\begin{equation}
\begin{aligned}
\mathcal{L}_{\rm distill}
&\\
&\hspace{-2.3em}=
\mathbb{E}_{k,(z_0,y),\epsilon}
\Bigl\|
W_{\sigma_k,y}^{1/2}
\left(
D_{\theta_{N/2}}(z_{\sigma_k},\sigma_k,y)
-
\operatorname{sg}[D_k^{\dagger}]
\right)
\Bigr\|_2^2.
\end{aligned}
\label{eq:distillation_loss}
\end{equation}
The precision $W_{\sigma_k,y}$ rescales prediction residuals,
while endpoint consistency follows from $D_k^\dagger$.
We apply this Euler-based endpoint distillation successively as
$32\rightarrow16\rightarrow8$.

\paragraph{Final covariance-ratio distillation from 8 to 4 steps}
For the $8\rightarrow4$ stage, we use a covariance-ratio
transition on the four-step schedule. For two noise levels
$\sigma_a>\sigma_b$, define
\begin{equation}
R_{a\rightarrow b}(y)
=
\left(
K_{\sigma_b}(y)
K_{\sigma_a}(y)^{-1}
\right)^{1/2},
\label{eq:distillation_covariance_ratio}
\end{equation}
where the inverse and square root are evaluated elementwise because
$K_\sigma(y)$ is diagonal. The corresponding deterministic transition
is
\begin{equation}
\begin{aligned}
\Psi_{\vartheta}
\left(
z_{\sigma_a};
\sigma_a\!\rightarrow\!\sigma_b,y
\right)
&\\
&\hspace{-8em}=
D_{\vartheta}(z_{\sigma_a},\sigma_a,y)
+
R_{a\rightarrow b}(y)
\left[
z_{\sigma_a}
-
D_{\vartheta}(z_{\sigma_a},\sigma_a,y)
\right].
\end{aligned}
\label{eq:covariance_ratio_transition}
\end{equation}
For final stage, we retain the trained 8-step denoiser but switch
its teacher rollout from Euler to the covariance-ratio rule:
\begin{equation}
z_{\sigma_{k+2}}^{T}
=
\Psi_{\bar\theta_8}
\left(
\Psi_{\bar\theta_8}
\left(
z_{\sigma_k};
\sigma_k\!\rightarrow\!\sigma_{k+1},y
\right);
\sigma_{k+1}\!\rightarrow\!\sigma_{k+2},y
\right).
\label{eq:ratio_teacher_endpoint}
\end{equation}

Let
\[
R_k
=
R_{k\rightarrow k+2}(y)
=
\left(
K_{\sigma_{k+2}}(y)
K_{\sigma_k}(y)^{-1}
\right)^{1/2}.
\]
Equating one coarse student transition with the two-step teacher
endpoint and solving for the denoised prediction gives
\begin{equation}
D_{k,\mathrm{ratio}}^{\dagger}
=
\left(I-R_k\right)^{-1}
\left(
z_{\sigma_{k+2}}^{T}
-
R_k z_{\sigma_k}
\right).
\label{eq:ratio_distillation_target}
\end{equation}
\begin{theorem}[Covariance-ratio endpoint consistency]
\label{thm:ratio_endpoint_consistency}
Under the monotone diagonal covariance schedule in Eq.~(5),
$I-R_k$ is nonsingular, and $D_{k,\mathrm{ratio}}^\dagger$ uniquely matches the
two-step covariance-ratio teacher endpoint in Eq.~\eqref{eq:ratio_teacher_endpoint}.
\end{theorem}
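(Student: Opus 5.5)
The plan is to use the fact that $K_\sigma(y)$ is diagonal, so the statement reduces to a coordinatewise scalar argument followed by one line of linear algebra. Write $a_{\nu,j}(y)>0$ for the $j$-th entry of $a_\nu(y)$. Then the $j$-th diagonal entry of $K_\sigma(y)$ in Eq.~\eqref{eq:latent_covariance} is
\[
\kappa_j(\sigma)=\sigma^2+\lambda_{\rm cov}\,\sigma\,a_{\nu,j}(y),
\]
and I will assume $\lambda_{\rm cov}\ge 0$, which is what ``monotone schedule'' should mean here. For $\sigma\ge0$ we have $\kappa_j'(\sigma)=2\sigma+\lambda_{\rm cov}a_{\nu,j}(y)$, which is strictly positive for $\sigma>0$. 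Hence $\kappa_j$ is strictly increasing on $[0,\infty)$, with $\kappa_j(0)=0$ and $\kappa_j(\sigma)>0$ for $\sigma>0$.

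\textbf{Step 1: $R_k$ is well defined with entries in $[0,1)$.} The ratio is taken between $\sigma_a=\sigma_k$ and $\sigma_b=\sigma_{k+2}$, where the schedule gives $\sigma_k>\sigma_{k+2}\ge 0$, so $\sigma_k>0$. Therefore $K_{\sigma_k}(y)$ is positive definite and its elementwise inverse exists. Each diagonal entry of $R_k$ equals
\[
r_j=\sqrt{\kappa_j(\sigma_{k+2})/\kappa_j(\sigma_k)},
\]
which is a real number, since numerator and denominator are nonnegative. Strict monotonicity gives $0\le \kappa_j(\sigma_{k+2})<\kappa_j(\sigma_k)$, so $r_j\in[0,1)$. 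The terminal case $\sigma_{k+2}=0$ needs no separate treatment: it simply gives $r_j=0$ and $R_k=0$. Hence $I-R_k$ is diagonal with entries $1-r_j\in(0,1]$, so it is nonsingular, and $(I-R_k)^{-1}$ is again elementwise.

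\textbf{Step 2: uniqueness.} The one-step student transition Eq.~\eqref{eq:covariance_ratio_transition} from $\sigma_k$ to $\sigma_{k+2}$ depends on the network only through its prediction $D$, and it can be rewritten as an affine map of $D$:
\[
\Psi=D+R_k(z_{\sigma_k}-D)=(I-R_k)D+R_kz_{\sigma_k}.
\]
Setting this equal to the teacher endpoint $z^T_{\sigma_{k+2}}$ from Eq.~\eqref{eq:ratio_teacher_endpoint} gives the linear system
\[
(I-R_k)D=z^T_{\sigma_{k+2}}-R_kz_{\sigma_k}.
\]
By Step~1 the matrix $I-R_k$ is invertible, so this system has exactly one solution, namely Eq.~\eqref{eq:ratio_distillation_target}. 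Conversely, substituting $D_{k,\mathrm{ratio}}^\dagger$ back into the transition reproduces $z^T_{\sigma_{k+2}}$, which gives both existence and uniqueness.

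\textbf{Main obstacle.} The only delicate point is establishing $r_j<1$ strictly, as opposed to $r_j\le 1$. This requires the schedule to be strictly increasing, which in turn needs $\lambda_{\rm cov}\ge0$ together with $\sigma_k>0$; I will state these explicitly as the content of ``monotone.'' The endpoint $\sigma_{k+2}=0$ is also worth flagging, but it is harmless: there $R_k=0$, and the target reduces to $D^\dagger=z^T_{\sigma_{k+2}}$.
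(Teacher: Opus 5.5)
Your proof is correct and follows the paper's argument. Both rewrite the coarse transition as the affine map $R_k z_{\sigma_k}+(I-R_k)D$, invert $I-R_k$, and confirm the endpoint by direct substitution, with nonsingularity coming from the diagonal entries of $R_k$ lying in $[0,1)$. Your coordinatewise check, that $\sigma^2+\lambda_{\rm cov}\sigma a_{\nu,j}(y)$ is strictly increasing for $\lambda_{\rm cov}\ge 0$, makes explicit what the paper states only as a remark after its proof.
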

The 4-step student is trained with
Eq.~\eqref{eq:distillation_loss} after replacing
$D_k^\dagger$ by $D_{k,\mathrm{ratio}}^\dagger$.

The covariance-ratio transition is first-order consistent with the
guided latent ODE because
\begin{equation}
R_{\sigma\rightarrow\sigma+\Delta\sigma}(y)
=
I
+
\Delta\sigma B_{\sigma,y}
+
\mathcal{O}(\Delta\sigma^2).
\label{eq:ratio_local_consistency}
\end{equation}
It analytically accounts for the covariance scaling while holding the
denoised prediction fixed over each transition. At inference, the
4-step student uses the same coarse covariance-ratio update in
Eq.~\eqref{eq:covariance_ratio_transition}. Each transition evaluates
the single conditional denoiser exactly once; hence the final sampler requires four conditional NFEs,
followed by one CVAE decode.

The complete hybrid distillation schedule is therefore
\[
32
\xrightarrow{\text{Euler}}
16
\xrightarrow{\text{Euler}}
8
\xrightarrow{\text{covariance ratio}}
4.
\]
Appendix~\ref{app:distillation_proof} provides the endpoint-consistency
and local-consistency derivations. The overall training procedure is
summarized in Algorithms~\ref{alg:latent_training}--\ref{alg:progressive_distillation},
with the four-step sampling procedure described in
Appendix~\ref{app:algorithm}.

\section{Experiments}
\label{sec:experiments}

\begin{table*}[!t]
    \setlength{\tabcolsep}{1.2mm}
    \renewcommand{\arraystretch}{1.10}
    \small
    \centering

    \caption{Quantitative comparison of image generation performance under
    low-to-medium resolutions ($\leq 128\times128$). Arrows
    ($\uparrow/\downarrow$) present the preferred optimization directions.
    For the primary quality metric (SFID), the best results are highlighted
    in \textbf{bold} and the second-best in \underline{underlined}.
    Evaluation profiles corrupted by severe training pathologies
    (e.g., severe label inconsistency or mode collapse) are denoted in
    \fail{red}.}

    \resizebox{\textwidth}{!}{%
    \begin{tabular}{c:c:c:c:cccc:cc}
        \hline
        \shortstack{\textbf{Dataset}}
        &
        \shortstack{\textbf{Condition}}
        &
        \shortstack{\textbf{Framework}}
        &
        \textbf{Method}
        &
        \shortstack{\textbf{SFID}} $\downarrow$
        &
        \textbf{NIQE} $\downarrow$
        &
        \textbf{Diversity} $\uparrow$
        &
        \shortstack{\textbf{LS}} $\downarrow$
        &
        \textbf{img/sec} $\uparrow$
        &
        \shortstack{\textbf{Mem(GiB)}} $\downarrow$
        \\
        \hline

        \multirow{11}*{\shortstack{\textbf{RC-49}\\(64$\times$64)}}
        & \multirow{4}[0]{*}{Text}
        & \multirow{4}[0]{*}{Diffusion}
        & SD 3 Medium (ICML'24)~\cite{esser2024scaling}
        & \fail{1.215} & 1.996 & 2.272 & \fail{28.726} & 1.90 & 19.74 \\

        & & &
        FLUX.1 (arXiv'25)~\cite{labs2025flux1kontextflowmatching}
        & 0.515 & 2.330 & \fail{2.512} & \fail{12.751} & 3.87 & 42.08 \\

        & & &
        Qwen-Image (arXiv'25)~\cite{wu2025qwen}
        & \fail{1.132} & 2.436 & \fail{2.470} & \fail{27.757}
        & 9.58 & 56.91 \\

        & & &
        MIRO (ICML'26)~\cite{dufour2026miro}
        & \fail{2.956} & 3.146 & \fail{0.791} & \fail{10.66}
        & 1.19 & 6.53 \\

        \cdashline{2-10}

        & \multirow{7}[0]{*}{Continuous}
        & \multirow{3}[0]{*}{GAN}
        & CcGAN (T-PAMI'23)~\cite{ding2023ccgan}
        & 0.126 & 1.809 & 3.451 & 2.655 & 6243.06 & 2.62 \\

        & & &
        Dual-NDA (AAAI'24)~\cite{ding2024turning}
        & 0.148 & 1.808 & 3.344 & 2.211 & 6243.06 & 2.62 \\

        & & &
        CcGAN-AVAR (arXiv'25)~\cite{ding2025imbalance}
        & \textbf{0.042} & 1.734 & 3.723 & 1.270 & 4162.04 & 3.71 \\

        \cdashline{3-10}

        & &
        \multirow{4}[0]{*}{Diffusion}
        & CcDPM (AAAI'24)~\cite{zhao2024ccdpm}
        & \fail{0.970} & 2.153 & 3.581 & \fail{24.174} & 2.92 & 5.07 \\

        & & &
        CCDM (TMM'26)~\cite{ding2026ccdm}
        & 0.049 & 2.086 & 3.698 & 1.074 & 2.92 & 5.53 \\

        & & &
        iCCDM (arXiv'26)~\cite{ding2026iccdm}
        & 0.049 & 1.718 & 3.696 & 0.813 & 12.67 & 3.69 \\

        & & &
        \textbf{CAViLD (Ours)}
        & \underline{0.048} & 2.078 & 3.706 & 1.068 & 370.72 & 2.46 \\

        \hline

        \multirow{11}*{\shortstack{\textbf{UTKFace}\\(64$\times$64)}}
        & \multirow{4}[0]{*}{Text}
        & \multirow{4}[0]{*}{Diffusion}
        & SD 3 Medium (ICML'24)~\cite{esser2024scaling}
        & \fail{2.012} & 2.068 & \fail{0.426} & 5.452 & 1.50 & 19.74 \\

        & & &
        FLUX.1 (arXiv'25)~\cite{labs2025flux1kontextflowmatching}
        & 0.816 & 1.915 & \fail{0.713} & 9.174 & 3.80 & 42.08 \\

        & & &
        Qwen-Image (arXiv'25)~\cite{wu2025qwen}
        & 1.973 & 2.135 & 1.051 & \fail{10.770} & 9.37 & 56.90 \\

        & & &
        MIRO (ICML'26)~\cite{dufour2026miro}
        & \fail{4.763} & 2.193 & 1.108 & \fail{10.888} & 1.19 & 6.28 \\

        \cdashline{2-10}

        & \multirow{7}[0]{*}{Continuous}
        & \multirow{3}[0]{*}{GAN}
        & CcGAN (T-PAMI'23)~\cite{ding2023ccgan}
        & 0.413 & 1.733 & 1.329 & 8.240 & 5555.56 & 2.44 \\

        & & &
        Dual-NDA (AAAI'24)~\cite{ding2024turning}
        & 0.396 & 1.678 & 1.298 & 6.765 & 5555.56 & 2.44 \\

        & & &
        CcGAN-AVAR (arXiv'25)~\cite{ding2025imbalance}
        & 0.356 & 1.691 & 1.278 & 6.696 & 4166.67 & 3.71 \\

        \cdashline{3-10}

        & &
        \multirow{4}[0]{*}{Diffusion}
        & CcDPM (AAAI'24)~\cite{zhao2024ccdpm}
        & 0.466 & 1.560 & 1.211 & 6.868 & 2.10 & 5.70 \\

        & & &
        CCDM (TMM'26)~\cite{ding2026ccdm}
        & 0.363 & 1.542 & 1.184 & 6.164 & 2.10 & 6.14 \\

        & & &
        iCCDM (arXiv'26)~\cite{ding2026iccdm}
        & \underline{0.353} & 1.561 & 1.146 & 5.788 & 11.13 & 4.94 \\

        & & &
        \textbf{CAViLD (Ours)}
        & \textbf{0.344} & 1.611 & 1.206 & 5.608 & 355.48 & 2.53 \\

        \hline

        \multirow{11}*{\shortstack{\textbf{UTKFace}\\(128$\times$128)}}
        & \multirow{4}[0]{*}{Text}
        & \multirow{4}[0]{*}{Diffusion}
        & SD 3 Medium (ICML'24)~\cite{esser2024scaling}
        & \fail{1.582} & 1.901 & \fail{0.396} & 5.019 & 1.43 & 19.74 \\

        & & &
        FLUX.1 (arXiv'25)~\cite{labs2025flux1kontextflowmatching}
        & 0.643 & 1.748 & \fail{0.661} & 9.281 & 3.95 & 42.08 \\

        & & &
        Qwen-Image (arXiv'25)~\cite{wu2025qwen}
        & 1.517 & 1.653 & 0.971 & 9.731 & 9.73 & 59.33 \\

        & & &
        MIRO (ICML'26)~\cite{dufour2026miro}
        & \fail{3.590} & 2.069 & \fail{0.691} & \fail{12.001}
        & 1.20 & 6.28 \\

        \cdashline{2-10}

        & \multirow{7}[0]{*}{Continuous}
        & \multirow{3}[0]{*}{GAN}
        & CcGAN (T-PAMI'23)~\cite{ding2023ccgan}
        & 0.367 & 1.113 & 1.199 & 7.747 & 1282.05 & 7.58 \\

        & & &
        Dual-NDA (AAAI'24)~\cite{ding2024turning}
        & 0.361 & 1.081 & 1.257 & 6.310 & 1282.05 & 7.58 \\

        & & &
        CcGAN-AVAR (arXiv'25)~\cite{ding2025imbalance}
        & 0.297 & 1.173 & 1.251 & 6.586 & 1190.48 & 11.57 \\

        \cdashline{3-10}

        & &
        \multirow{4}[0]{*}{Diffusion}
        & CcDPM (AAAI'24)~\cite{zhao2024ccdpm}
        & 0.529 & 1.114 & 1.195 & 7.933 & 1.12 & 17.91 \\

        & & &
        CCDM (TMM'26)~\cite{ding2026ccdm}
        & 0.319 & 1.077 & 1.178 & 6.359 & 1.12 & 19.52 \\

        & & &
        iCCDM (arXiv'26)~\cite{ding2026iccdm}
        & \underline{0.294} & 1.048 & 1.225 & 5.823 & 2.85 & 5.04 \\

        & & &
        \textbf{CAViLD (Ours)}
        & \textbf{0.284} & 1.088 & 1.131 & 5.869 & 25.72 & 5.10 \\

        \hline

        \multirow{11}*{\shortstack{\textbf{Steering Angle}\\(64$\times$64)}}
        & \multirow{4}[0]{*}{Text}
        & \multirow{4}[0]{*}{Diffusion}
        & SD 3 Medium (ICML'24)~\cite{esser2024scaling}
        & \fail{4.007} & 1.685 & 0.833 & \fail{19.885} & 1.42 & 19.74 \\

        & & &
        FLUX.1 (arXiv'25)~\cite{labs2025flux1kontextflowmatching}
        & 1.795 & 1.766 & 1.416 & \fail{19.911} & 3.83 & 42.08 \\

        & & &
        Qwen-Image (arXiv'25)~\cite{wu2025qwen}
        & 2.557 & 2.259 & 1.357 & \fail{20.734} & 9.56 & 56.9 \\

        & & &
        MIRO (ICML'26)~\cite{dufour2026miro}
        & \fail{3.497} & \fail{2.689} & 0.962 & \fail{32.621}
        & 1.18 & 6.53 \\

        \cdashline{2-10}

        & \multirow{7}[0]{*}{Continuous}
        & \multirow{3}[0]{*}{GAN}
        & CcGAN (T-PAMI'23)~\cite{ding2023ccgan}
        & 1.334 & 1.784 & 1.234 & 14.807 & 3086.42 & 2.50 \\

        & & &
        Dual-NDA (AAAI'24)~\cite{ding2024turning}
        & 1.114 & 1.738 & 1.251 & 11.809 & 3086.42 & 2.50 \\

        & & &
        CcGAN-AVAR (arXiv'25)~\cite{ding2025imbalance}
        & 0.809 & 1.800 & 1.204 & 6.963 & 1157.41 & 3.86 \\

        \cdashline{3-10}

        & &
        \multirow{4}[0]{*}{Diffusion}
        & CcDPM (AAAI'24)~\cite{zhao2024ccdpm}
        & 0.939 & 1.761 & 1.150 & 10.999 & 3.33 & 2.28 \\

        & & &
        CCDM (TMM'26)~\cite{ding2026ccdm}
        & 0.742 & 1.778 & 1.088 & 5.823 & 3.33 & 2.74 \\

        & & &
        iCCDM (arXiv'26)~\cite{ding2026iccdm}
        & \underline{0.501} & 1.700 & 1.093 & 4.891 & 12.82 & 4.94 \\

        & & &
        \textbf{CAViLD (Ours)}
        & \textbf{0.358} & 1.724 & 1.048 & 5.596 & 609.76 & 1.46 \\

        \hline

        \multirow{11}*{\shortstack{\textbf{Steering Angle}\\(128$\times$128)}}
        & \multirow{4}[0]{*}{Text}
        & \multirow{4}[0]{*}{Diffusion}
        & SD 3 Medium (ICML'24)~\cite{esser2024scaling}
        & \fail{4.191} & 4.071 & \fail{0.512} & \fail{20.072}
        & 1.48 & 19.74 \\

        & & &
        FLUX.1 (arXiv'25)~\cite{labs2025flux1kontextflowmatching}
        & 1.676 & 3.090 & 1.072 & \fail{20.526} & 3.86 & 42.08 \\

        & & &
        Qwen-Image (arXiv'25)~\cite{wu2025qwen}
        & 2.910 & 3.749 & 1.062 & \fail{18.455} & 9.68 & 59.34 \\

        & & &
        MIRO (ICML'26)~\cite{dufour2026miro}
        & \fail{3.869} & 4.784 & 0.977 & \fail{29.707}
        & 1.14 & 6.53 \\

        \cdashline{2-10}

        & \multirow{7}[0]{*}{Continuous}
        & \multirow{3}[0]{*}{GAN}
        & CcGAN (T-PAMI'23)~\cite{ding2023ccgan}
        & 1.689 & 2.411 & 1.088 & \fail{18.438} & 1543.21 & 7.58 \\

        & & &
        Dual-NDA (AAAI'24)~\cite{ding2024turning}
        & 1.390 & 2.135 & 1.133 & 14.099 & 1543.21 & 7.58 \\

        & & &
        CcGAN-AVAR (arXiv'25)~\cite{ding2025imbalance}
        & 0.888 & 2.288 & 1.123 & 7.507 & 1207.73 & 11.57 \\

        \cdashline{3-10}

        & &
        \multirow{4}[0]{*}{Diffusion}
        & CcDPM (AAAI'24)~\cite{zhao2024ccdpm}
        & 1.285 & 1.989 & 1.203 & \fail{18.325} & 1.23 & 5.27 \\

        & & &
        CCDM (TMM'26)~\cite{ding2026ccdm}
        & 0.987 & 1.977 & 1.118 & 11.829 & 1.23 & 6.83 \\

        & & &
        iCCDM (arXiv'26)~\cite{ding2026iccdm}
        & \underline{0.742} & 1.973 & 1.007 & 5.652 & 2.75 & 5.04 \\

        & & &
        \textbf{CAViLD (Ours)}
        & \textbf{0.316} & 2.584 & 0.969 & 5.170 & 42.65 & 5.16 \\

        \hline
    \end{tabular}%
    }

    \label{tab:low_res_comparison}
\end{table*}

\begin{table*}[!t]

\setlength{\tabcolsep}{1.2mm}
\renewcommand{\arraystretch}{1.10}

\small
\centering

\caption{Average quality and efficiency of high-resolution generated images
($\geq 192\times192$). Arrows ($\uparrow/\downarrow$) indicate the
preferred optimization directions. For the primary quality metric
(SFID), the best results are highlighted in \textbf{bold} and the
second-best in \underline{underlined}. Evaluation profiles corrupted
by severe training pathologies (e.g., severe label inconsistency or
mode collapse) are denoted in \fail{red}.}
\label{tab:high_res_comparison}

\resizebox{\textwidth}{!}{%

\begin{tabular}{c:c:c:c:cccc:cc}

\hline

\makecell{\textbf{Dataset}}
&
\makecell{\textbf{Condition}}
&
\makecell{\textbf{Framework}}
&
\textbf{Method}
&
\makecell{\textbf{SFID} $\downarrow$}
&
\textbf{NIQE} $\downarrow$
&
\textbf{Diversity} $\uparrow$
&
\makecell{\textbf{LS} $\downarrow$}
&
\makecell{\textbf{img/sec} $\uparrow$}
&
\makecell{\textbf{Mem(GiB)} $\downarrow$}
\\

\hline


\multirow{11}{*}{
    \makecell{
        \textbf{UTKFace}\\
        (192$\times$192)
    }
}
&
\multirow{4}{*}{Text}
&
\multirow{4}{*}{Diffusion}
&
SD 3 Medium (ICML'24)~\cite{esser2024scaling}
&
\fail{1.682}
&
2.436
&
\fail{0.383}
&
5.188
&
1.48
&
20.98
\\

& & &
FLUX.1 (arXiv'25)~\cite{labs2025flux1kontextflowmatching}
&
0.771
&
2.499
&
0.602
&
9.191
&
3.96
&
42.08
\\

& & &
Qwen-Image (arXiv'25)~\cite{wu2025qwen}
&
1.617
&
1.909
&
0.943
&
9.868
&
9.64
&
62.43
\\

& & &
MIRO (ICML'26)~\cite{dufour2026miro}
&
\fail{3.774}
&
2.197
&
1.036
&
\fail{11.026}
&
1.19
&
6.28
\\

\cdashline{2-10}

&
\multirow{7}{*}{Continuous}
&
\multirow{3}{*}{GAN}
&
CcGAN (T-PAMI'23)~\cite{ding2023ccgan}
&
0.499
&
1.661
&
1.207
&
7.885
&
617.28
&
18.18
\\

& & &
Dual-NDA (AAAI'24)~\cite{ding2024turning}
&
0.487
&
1.483
&
1.201
&
6.730
&
617.28
&
18.18
\\

& & &
CcGAN-AVAR (arXiv'25)~\cite{ding2025imbalance}
&
0.435
&
1.588
&
1.187
&
6.377
&
1111.11
&
13.18
\\

\cdashline{3-10}

& &
\multirow{4}{*}{Diffusion}
&
CcDPM (AAAI'24)~\cite{zhao2024ccdpm}
&
0.970
&
1.522
&
1.187
&
\fail{11.224}
&
0.80
&
19.93
\\

& & &
CCDM (TMM'26)~\cite{ding2026ccdm}
&
0.467
&
1.242
&
1.148
&
7.336
&
0.80
&
22.99
\\

& & &
iCCDM (arXiv'26)~\cite{ding2026iccdm}
&
\underline{0.433}
&
1.850
&
1.210
&
6.323
&
1.25
&
10.87
\\

& & &
\textbf{CAViLD (Ours)}
&
\textbf{0.424}
&
1.892
&
1.193
&
5.664
&
26.57
&
10.63
\\

\hline


\multirow{11}{*}{
    \makecell{
        \textbf{UTKFace}\\
        (256$\times$256)
    }
}
&
\multirow{4}{*}{Text}
&
\multirow{4}{*}{Diffusion}
&
SD 3 Medium (ICML'24)~\cite{esser2024scaling}
&
\fail{0.906}
&
1.524
&
\fail{0.454}
&
5.850
&
1.45
&
24.23
\\

& & &
FLUX.1 (arXiv'25)~\cite{labs2025flux1kontextflowmatching}
&
0.369
&
1.576
&
\fail{0.614}
&
9.010
&
3.93
&
42.08
\\

& & &
Qwen-Image (arXiv'25)~\cite{wu2025qwen}
&
0.953
&
1.362
&
0.980
&
\fail{10.941}
&
9.82
&
65.8
\\

& & &
MIRO (ICML'26)~\cite{dufour2026miro}
&
\fail{1.848}
&
2.467
&
1.066
&
\fail{9.249}
&
1.15
&
6.53
\\

\cdashline{2-10}

&
\multirow{7}{*}{Continuous}
&
\multirow{3}{*}{GAN}
&
CcGAN (T-PAMI'23)~\cite{ding2023ccgan}
&
0.347
&
1.308
&
1.053
&
6.010
&
580.62
&
22.29
\\

& & &
Dual-NDA (AAAI'24)~\cite{ding2024turning}
&
0.319
&
1.381
&
1.216
&
5.712
&
580.62
&
22.29
\\

& & &
CcGAN-AVAR (arXiv'25)~\cite{ding2025imbalance}
&
\textbf{0.196}
&
1.347
&
1.219
&
5.798
&
609.43
&
17.32
\\

\cdashline{3-10}

& &
\multirow{4}{*}{Diffusion}
&
CcDPM (AAAI'24)~\cite{zhao2024ccdpm}
&
0.559
&
1.552
&
0.854
&
\fail{12.121}
&
0.30
&
15.67
\\

& & &
CCDM (TMM'26)~\cite{ding2026ccdm}
&
0.268
&
1.364
&
1.121
&
7.555
&
0.30
&
21.75
\\

& & &
iCCDM (arXiv'26)~\cite{ding2026iccdm}
&
\underline{0.218}
&
1.207
&
1.142
&
6.199
&
0.69
&
18.35
\\

& & &
\textbf{CAViLD (Ours)}
&
\textbf{0.196}
&
1.289
&
1.105
&
5.778
&
23.57
&
16.22
\\

\hline


\multirow{11}{*}{
    \makecell{
        \textbf{Steering Angle}\\
        (256$\times$256)
    }
}
&
\multirow{4}{*}{Text}
&
\multirow{4}{*}{Diffusion}
&
SD 3 Medium (ICML'24)~\cite{esser2024scaling}
&
\fail{1.859}
&
1.595
&
0.907
&
\fail{19.806}
&
1.51
&
24.23
\\

& & &
FLUX.1 (arXiv'25)~\cite{labs2025flux1kontextflowmatching}
&
0.945
&
2.353
&
1.078
&
\fail{17.228}
&
3.93
&
42.08
\\

& & &
Qwen-Image (arXiv'25)~\cite{wu2025qwen}
&
1.383
&
2.525
&
1.001
&
\fail{16.034}
&
9.82
&
65.82
\\

& & &
MIRO (ICML'26)~\cite{dufour2026miro}
&
\fail{1.965}
&
1.804
&
0.802
&
\fail{27.598}
&
1.15
&
6.53
\\

\cdashline{2-10}

&
\multirow{7}{*}{Continuous}
&
\multirow{3}{*}{GAN}
&
CcGAN (T-PAMI'23)~\cite{ding2023ccgan}
&
0.984
&
1.999
&
1.054
&
8.399
&
539.16
&
13.63
\\

& & &
Dual-NDA (AAAI'24)~\cite{ding2024turning}
&
0.967
&
1.834
&
0.951
&
8.338
&
539.16
&
13.63
\\

& & &
CcGAN-AVAR (arXiv'25)~\cite{ding2025imbalance}
&
0.683
&
1.966
&
1.113
&
4.958
&
620.54
&
17.32
\\

\cdashline{3-10}

& &
\multirow{4}{*}{Diffusion}
&
CcDPM (AAAI'24)~\cite{zhao2024ccdpm}
&
1.217
&
1.748
&
1.330
&
\fail{29.239}
&
0.30
&
16.30
\\

& & &
CCDM (TMM'26)~\cite{ding2026ccdm}
&
0.902
&
1.616
&
1.151
&
\fail{23.057}
&
0.30
&
21.75
\\

& & &
iCCDM (arXiv'26)~\cite{ding2026iccdm}
&
\underline{0.582}
&
1.390
&
0.981
&
4.743
&
0.69
&
18.35
\\

& & &
\textbf{CAViLD (Ours)}
&
\textbf{0.371}
&
1.519
&
0.978
&
3.450
&
23.79
&
16.22
\\

\hline

\end{tabular}%
}

\end{table*}

To comprehensively evaluate the effectiveness and efficiency of our proposed
CAViLD framework, we conduct extensive experiments on standard continuous
conditional generative modeling benchmarks. We investigate three aspects:
generation fidelity and continuous label consistency, sampling efficiency,
and the quality of the learned latent representation under imbalanced
continuous conditions.

\subsection{Experimental Setup}
\label{subsec:exp_setup}

{\setlength{\parindent}{0cm}\textbf{Datasets.}}
We evaluate our proposed CAViLD on three widely used continuous conditional generative modeling benchmarks following previous CCGM protocols~\cite{ding2023ccgan,ding2026iccdm}.
These datasets cover diverse continuous control scenarios, including object pose manipulation, age-conditioned face synthesis, and autonomous driving, with image resolutions ranging from $64\times64$ to $256\times256$.

\begin{figure}[!htbp]
    \centering

    \includegraphics[
        width=0.6\columnwidth,
        trim=4 3 4 3,
        clip
    ]{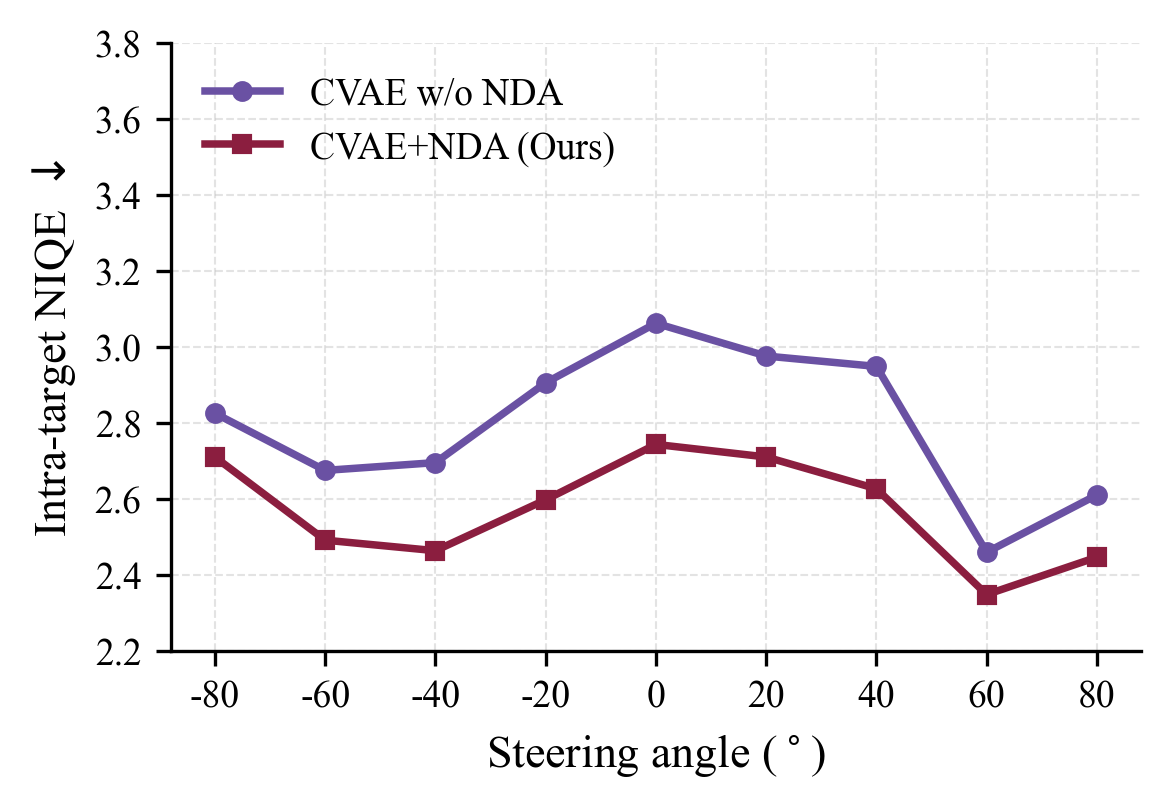}

    \caption{
    Effect of NDA on CVAE reconstruction quality. CVAE+NDA achieves lower NIQE across most steering-angle conditions.
    }
    \label{fig:nda_niqe_analysis}
\end{figure}

\begin{table}[!h]
    \centering

    \caption{Ablation of core components and acceleration strategies on the
    Steering Angle ($64\times64$) dataset. `NFEs' denotes the number of function
    evaluations, and `img/sec' denotes sampling throughput.}
    \label{tab:ablation_sa64}

    \setlength{\tabcolsep}{2.5pt}
    \renewcommand{\arraystretch}{1.10}
    \footnotesize

    \begin{tabular}{p{0.36\columnwidth}:c:ccc}

        \hline

        \textbf{Configurations}
        &
        \textbf{NFEs $\downarrow$}
        &
        \textbf{img/sec $\uparrow$}
        &
        \textbf{SFID $\downarrow$}
        &
        \textbf{LS $\downarrow$}
        \\

        \hline

        Baseline (iCCDM)~\cite{ding2026iccdm}
        &
        64
        &
        12.82
        &
        0.501
        &
        4.891
        \\

        + CVAE (no $\mathcal{L}_{aux}$)
        &
        64
        &
        26.83
        &
        0.270
        &
        5.805
        \\

        + Aux. Regularization
        &
        64
        &
        23.46
        &
        0.257
        &
        5.504
        \\

        + Model-Guidance (MG)
        &
        32
        &
        44.62
        &
        0.272
        &
        3.950
        \\

        + Distillation (16)
        &
        16
        &
        81.84
        &
        0.412
        &
        5.246
        \\

        + Distillation (8)
        &
        8
        &
        139.66
        &
        0.369
        &
        4.038
        \\

        + Distillation (4)
        &
        4
        &
        609.76
        &
        0.358
        &
        5.596
        \\

        + Distillation (2)
        &
        2
        &
        840.36
        &
        0.576
        &
        12.039
        \\

        \hline

    \end{tabular}

\end{table}

RC-49~\cite{ding2023ccgan} is a pose-controlled image generation benchmark containing chair images conditioned on continuous azimuth angles.
It is widely adopted for evaluating continuous viewpoint-conditioned generation.

UTKFace~\cite{utkface} consists of RGB face images with continuous age labels ranging from 1 to 60 years.
The naturally imbalanced age distribution makes it a challenging benchmark for evaluating continuous label consistency and robustness.

The Steering Angle dataset~\cite{steeringangle,steeringangle2}, preprocessed following Ding \textit{et al.}~\cite{ding2023ccgan}, contains real-world driving images conditioned on continuous steering angles.
It provides a practical physical control scenario with complex visual variations and non-uniform continuous label distributions.

\begin{table}[H]
    \centering

    \caption{MG vs. CFG under 32-step sampling.}
    \label{tab:mg_vs_cfg}

    \setlength{\tabcolsep}{3pt}
    \renewcommand{\arraystretch}{1.10}
    \footnotesize

    \resizebox{\columnwidth}{!}{%
    \begin{tabular}{
        p{1.25cm}:p{1.45cm}:cc:ccc
    }

        \hline

        \textbf{Dataset}
        &
        \textbf{Guidance}
        &
        \textbf{NFEs}$\downarrow$
        &
        \textbf{img/s}$\uparrow$
        &
        \textbf{SFID}$\downarrow$
        &
        \textbf{Div.}$\uparrow$
        &
        \textbf{LS}$\downarrow$
        \\

        \hline

        \multirow{2}{*}{Steering}
        & CFG & 64 & 23.46 & 0.257 & 1.072 & 5.504 \\

        & MG (Ours) & 32 & 44.62 & 0.272 & 1.053 & 3.950 \\

        \cdashline{1-7}

        \multirow{2}{*}{RC-49}
        & CFG & 64 & 38.66 & 0.045 & 3.708 & 1.040 \\

        & MG (Ours) & 32 & 59.34 & 0.049 & 3.697 & 0.944 \\

        \cdashline{1-7}

        \multirow{2}{*}{UTKFace}
        & CFG & 64 & 36.60 & 0.360 & 1.154 & 5.219 \\

        & MG (Ours) & 32 & 54.13 & 0.350 & 1.217 & 5.195 \\

        \hline

    \end{tabular}%
    }

\end{table}

{\setlength{\parindent}{0cm}\textbf{Evaluation Metrics.}}
Following CCGM protocols~\cite{ding2023ccgan,ding2026iccdm},
we evaluate performance using SFID, NIQE, Diversity, and
Label Score (LS), which measure conditional fidelity, perceptual quality,
intra-condition variation, and label consistency, respectively.
We report inference throughput in images per second (img/sec) and
aggregate peak GPU memory in GiB, measured on a workstation equipped
with NVIDIA RTX 4090 GPUs. Models exceeding single-GPU memory capacity
are distributed across multiple GPUs, with memory usage summed over
all participating devices.

\begin{table}[H]
    \centering

    \caption{NDA ablation for $128\times128$ latent diffusion.}
    \label{tab:nda_ablation}

    \setlength{\tabcolsep}{5pt}
    \renewcommand{\arraystretch}{1.15}

    \setlength{\arrayrulewidth}{0.25pt}

    \setlength{\dashlinedash}{2.2pt}
    \setlength{\dashlinegap}{2.5pt}

    \tiny

    \resizebox{\columnwidth}{!}{%
    \begin{tabular}{c:l:cc}

        \hline

        \textbf{Dataset}
        &
        \textbf{Configuration}
        &
        \textbf{SFID}$\downarrow$
        &
        \textbf{NIQE}$\downarrow$
        \\

        \hline

        \multirow{2}{*}{UTKFace}
        &
        Baseline (w/o NDA)
        &
        0.286
        &
        1.422
        \\

        &
        NDA
        &
        0.275
        &
        1.180
        \\

        \cdashline{1-4}

        \multirow{2}{*}{Steering Angle}
        &
        Baseline (w/o NDA)
        &
        0.371
        &
        2.817
        \\

        &
        NDA
        &
        0.316
        &
        2.584
        \\

        \hline

    \end{tabular}%
    }

\end{table}

{\setlength{\parindent}{0cm}\textbf{Compared Methods.}}
We include text-to-image diffusion models as baselines:
SD~3 Medium~\cite{esser2024scaling},
FLUX.1~\cite{labs2025flux1kontextflowmatching,flux2024},
Qwen-Image~\cite{wu2025qwen}, and MIRO~\cite{dufour2026miro},
fine-tuned on benchmarks, comparing with GAN-based CCGM methods:
CcGAN (SVDL+ILI)~\cite{ding2023ccgan},
Dual-NDA~\cite{ding2024turning}, and
CcGAN-AVAR~\cite{ding2025imbalance}, and diffusion-based CCGM methods:
CcDPM~\cite{zhao2024ccdpm}, CCDM~\cite{ding2026ccdm} and iCCDM~\cite{ding2026iccdm}.
We extend diffusion-based CCGM with latent manifold purification,
Model-Guidance, and progressive distillation for efficient sampling.

\begin{table}[H]
    \centering

    \caption{Cross-dataset comparison on $64\times64$ benchmarks.
    NFEs indicate the number of function evaluations.}
    \label{tab:distillation_comparison}

    \setlength{\tabcolsep}{3pt}
    \renewcommand{\arraystretch}{1.15}
    \footnotesize

    \resizebox{\columnwidth}{!}{%
    \begin{tabular}{
        p{1.15cm}:p{1.75cm}:cc:ccc
    }

        \hline

        \textbf{Dataset}
        &
        \textbf{Method}
        &
        \textbf{NFEs}$\downarrow$
        &
        \textbf{img/s}$\uparrow$
        &
        \textbf{SFID}$\downarrow$
        &
        \textbf{Div.}$\uparrow$
        &
        \textbf{LS}$\downarrow$
        \\

        \hline

        \multirow{4}{*}{\makecell{Steering\\Angle}}
        & LCM    & 4 & 472.64 & 0.886 & 0.923 & 8.257 \\
        & PCM    & 4 & 368.61 & 0.695 & 0.909 & 4.343 \\
        & Ours-2 & 2 & 840.36 & 0.576 & 1.133 & 12.039 \\
        & Ours-4 & 4 & 609.76 & 0.358 & 1.048 & 5.596 \\

        \cdashline{1-7}

        \multirow{4}{*}{RC-49}
        & LCM    & 4 & 766.35  & 0.151 & 3.498 & 2.074 \\
        & PCM    & 4 & 1107.22 & 0.469 & 2.714 & 1.626 \\
        & Ours-4 & 4 & 1577.19 & 0.114 & 3.565 & 1.848 \\
        & Ours-8 & 8 & 1119.83 & 0.056 & 3.685 & 1.518 \\
        & Ours-16 & 16 & 370.72 & 0.048 & 3.706 & 1.068 \\

        \cdashline{1-7}

        \multirow{4}{*}{UTKFace}
        & LCM     & 4  & 671.89  & 1.478 & 0.602 & 3.482 \\
        & PCM     & 4  & 5172.40 & 0.952 & 0.873 & 4.473 \\
        & Ours-4  & 4  & 1472.39 & 0.770 & 0.787 & 4.046 \\
        & Ours-16 & 16 & 355.48  & 0.344 & 1.206 & 5.608 \\

        \hline

    \end{tabular}%
    }

\end{table}
\vspace{\baselineskip}

{\setlength{\parindent}{0cm}\textbf{Implementation Setup.}}

The CVAE-based latent manifold is first trained with the proposed
purification strategy and then frozen for diffusion training.
The latent spatial compression factor is selected according to the
image resolution, using either $4\times$ or $8\times$ compression.
Unless otherwise specified, the shared diffusion and optimization
settings follow iCCDM~\cite{ding2026iccdm}.

For auxiliary semantic regularization, continuous labels are
min--max normalized to $[0,1]$. We use dataset-independent settings
across all experiments, with $\lambda_{\rm aux}=1.0$ and
$\epsilon_{\rm aux}=0.05$, where $\epsilon_{\rm aux}$ corresponds
to a tolerance of $5\%$ of the normalized label range.

For adaptive vicinal modeling, $N_{\rm AV}$ is selected per dataset
according to its continuous label distribution, following
CcGAN-AVAR~\cite{ding2025imbalance}. NDA uses the worst $20\%$ of
NIQE-ranked reconstructions as negatives and follows the Dual-NDA
penalty configuration~\cite{ding2024turning}. During diffusion training, an EMA teacher with decay $0.9999$
provides Model-Guidance targets.
The coefficient $w_{\rm MG}$ is an empirical training parameter,
distinct from the inference-time CFG scale.
The algebraic correspondence in Eq.~\eqref{eq:mg_clean_prediction}
does not imply matched guidance strength for the reported
MG and CFG configurations.

For efficient sampling, the 32-step teacher is progressively distilled
down to a 4-step student~\cite{salimans2022progressive}. For the
text-to-image baselines, we fine-tune checkpoints of
SD~3 Medium, FLUX.1, Qwen-Image, and MIRO using
LoRA~\cite{hu2022lora} on the corresponding RGB datasets under
the same evaluation protocol.


























\subsection{Experimental Results}
\label{subsec:experimental_results}

We evaluate CAViLD on eight experimental settings covering three continuous conditional generation benchmarks with resolutions ranging from $64\times64$ to $256\times256$.
Quantitative comparisons are summarized in Tables~\ref{tab:low_res_comparison} and~\ref{tab:high_res_comparison}.

\textbf{Overall Performance.}
CAViLD achieves competitive or superior performance across most experimental settings, demonstrating the effectiveness of latent-space modeling, Model-Guidance, and progressive distillation for continuous conditional generation.

Compared with pixel-space diffusion baselines, CAViLD substantially improves sampling efficiency while preserving generation fidelity.
For example, on Steering Angle $64\times64$, our method achieves an SFID of 0.358 with 609.76 img/sec, providing a significant acceleration over iCCDM.
On UTKFace $64\times64$, CAViLD further improves SFID from 0.353 (iCCDM) to 0.344 while achieving much higher throughput.

The framework also scales well to higher resolutions. On UTKFace $256\times256$, CAViLD achieves an SFID of 0.196, matching the strongest competitors while retaining diffusion-based controllability. On Steering Angle $256\times256$, it obtains an SFID of 0.371 and the best Label Score among diffusion-based methods, indicating strong continuous condition adherence. Compared with GAN- and diffusion-based CCGM baselines, CAViLD offers a favorable quality--efficiency balance while substantially reducing iterative sampling cost. Overall, these results demonstrate an effective trade-off among generation fidelity, continuous condition controllability, and practical sampling efficiency.


\subsection{Ablation Study}
\label{subsec:ablation}

In addition to the main comparisons, we conduct ablation studies on
Steering Angle and UTKFace to evaluate latent-space modeling, vicinal
learning, auxiliary regularization, Model-Guidance (MG), progressive
distillation, and Negative Data Augmentation (NDA).
Table~\ref{tab:ablation_sa64} shows that introducing the CVAE latent
representation substantially improves both generation quality and sampling
efficiency over pixel-space iCCDM, while auxiliary regularization further
improves label consistency. Additional CVAE-level results in
Appendix~\ref{app:cvae_ablation} further directly isolate the effect of vicinal
learning on representation quality and label consistency.
MG reduces the required NFEs by internalizing the guidance direction during
training, and progressive distillation compresses the sampler from 32 to
4 steps while maintaining overall competitive quality. Further reducing the sampler
to 2 steps causes clear degradation in SFID and Label Score, indicating the
limit of trajectory compression. Table~\ref{tab:mg_vs_cfg} confirms that MG
achieves comparable or better conditional generation with fewer NFEs than
Classifier-Free Guidance, while Table~\ref{tab:distillation_comparison}
shows a favorable quality--efficiency trade-off over
LCM~\cite{Luo2023LatentCM} and PCM~\cite{Wang2024PhasedCM}.
Finally, Table~\ref{tab:nda_ablation} shows that NDA improves generation
quality at $128\times128$.
These combined results support the joint use of the proposed components in CAViLD,
with the final configurations selected primarily according to SFID and
sampling efficiency.

\section{Conclusion}
\label{sec:conclusion}

In this paper, we propose CAViLD, a latent diffusion framework for efficient continuous conditional image generation.
By transferring the generation process from pixel space to a purified CVAE latent space, CAViLD improves the quality and condition awareness
of the learned latent representation under sparse and imbalanced conditions.
Furthermore, Model-Guidance is incorporated into the training process to eliminate the additional unconditional inference required by conventional classifier-free guidance, while progressive distillation further accelerates sampling.

Extensive experiments on RC-49, UTKFace, and Steering Angle datasets with resolutions up to $256\times256$ demonstrate that CAViLD achieves a favorable balance between generation fidelity, continuous condition consistency, and sampling efficiency.
These results validate the effectiveness of latent-space modeling, guidance-aware training, and trajectory distillation for scalable continuous conditional image generation.

\bibliographystyle{IEEEtran}
\bibliography{references}

\clearpage
\appendices

\section{Vicinal Risk Bound}
\label{app:vicinal_analysis}

This appendix supplies the statistical argument for
Section~\ref{subsec:cvae}. All expectations are conditioned on the observed
labels $\{y_i\}_{i=1}^{N_{\rm data}}$ and parameters $\Theta$
independent of the evaluated samples.
To include the CVAE reparameterization noise, define
\begin{equation}
\ell_{\Theta}(x,y)
=
\mathbb{E}_{\xi}
\left[
\ell_{\Theta}(x,y;\xi)
\right].
\label{eq:posterior_averaged_loss}
\end{equation}
Let $P_y$ be the conditional image distribution. Define the conditional risk
\begin{equation}
\mathcal{R}(y)
=
\mathbb{E}_{X\sim P_y}
\left[
\ell_{\Theta}(X,y)
\right].
\label{eq:conditional_risk}
\end{equation}
For a query label $y^\star$, define its vicinal approximation and empirical
estimator as
\begin{align}
\mathcal{R}^{v}(y^\star)
&=
\sum_i
\bar W_{i,y^\star}
\mathcal{R}(y_i),
\nonumber\\
\widehat{\mathcal{R}}^{v}(y^\star)
&=
\sum_i
\bar W_{i,y^\star}
\ell_{\Theta}(X_i,y_i),
\qquad
X_i\sim P_{y_i}.
\label{eq:appendix_risks}
\end{align}
The effective sample size associated with the normalized vicinal weights is
\begin{equation}
 N_{\mathrm{eff}}(y^\star)
 =
 \left(
 \sum_i\bar W_{i,y^\star}^2
 \right)^{-1}.
 \label{eq:effective_sample_size}
\end{equation}

\noindent\textit{Detailed assumptions and bound.}
Assume that the conditional risk is label-Lipschitz:
\begin{equation}
\left|
\mathcal{R}(y')-\mathcal{R}(y)
\right|
\leq
L|y'-y|.
\label{eq:risk_lipschitz}
\end{equation}
If the $X_i$ are conditionally independent and
\begin{equation}
 \operatorname{Var}_{P_{y_i}}
 \left[
 \ell_{\Theta}(X_i,y_i)
 \right]
 \leq
 \sigma_{\ell}^{2},
\label{eq:loss_variance_bound}
\end{equation}
then

\begin{equation}
\hspace*{-0.5em}
\begin{aligned}
&\mathbb{E}\!\left[
\bigl(\widehat{\mathcal R}^{v}(y^\star)
-\mathcal R(y^\star)\bigr)^2
\right] \\
&\leq
L^2\!\left(
\sum_i \bar W_{i,y^\star}|y_i-y^\star|
\right)^2
+\frac{\sigma_\ell^2}{N_{\mathrm{eff}}(y^\star)} \\
&\leq
L^2\kappa(y^\star)^2
+\frac{\sigma_\ell^2}{|\mathcal N(y^\star)|}
\leq
L^2\kappa(y^\star)^2
+\frac{\sigma_\ell^2}{N_{\mathrm{AV}}}.
\end{aligned}
\label{eq:vicinal_bound}
\end{equation}

\begin{proof}[Proof of Theorem~\ref{thm:vicinal_bound}]
Because $\sum_i\bar W_{i,y^\star}=1$, the label-Lipschitz assumption gives
\begin{align}
\left|
\mathcal{R}^{v}(y^\star)
-\mathcal{R}(y^\star)
\right|
&\leq
\sum_i
\bar W_{i,y^\star}
\left|
\mathcal{R}(y_i)-\mathcal{R}(y^\star)
\right|
\nonumber\\
&\leq
L\sum_i
\bar W_{i,y^\star}|y_i-y^\star|.
\label{eq:vicinal_bias_appendix}
\end{align}
Moreover, conditional independence implies
\begin{align}
 \operatorname{Var}
 \bigl(\widehat{\mathcal{R}}^{v}(y^\star)\bigr)
 &=
 \sum_i\bar W_{i,y^\star}^2
 \operatorname{Var}_{P_{y_i}}
 \bigl[\ell_{\Theta}(X_i,y_i)\bigr]
 \nonumber\\
 &\leq
 \sigma_{\ell}^2\sum_i\bar W_{i,y^\star}^2
 =
 \frac{\sigma_{\ell}^2}{N_{\mathrm{eff}}(y^\star)}.
 \label{eq:vicinal_variance_appendix}
\end{align}
Finally,
$\mathbb{E}[
\widehat{\mathcal{R}}^{v}(y^\star)
-\mathcal{R}^{v}(y^\star)]=0$;
hence the cross term vanishes in the bias--variance decomposition.
Substitution of Eqs.~\eqref{eq:vicinal_bias_appendix} and
\eqref{eq:vicinal_variance_appendix} proves
Eq.~\eqref{eq:vicinal_bound}.
\end{proof}

The first term increases with neighborhood width, whereas the second decreases
as the weights spread over more examples. The bound therefore provides a
bias--variance motivation for the adaptive neighborhood in
Eq.~\eqref{eq:adaptive_radius}: dense regions favor locality, while sparse regions generally require wider label neighborhoods
for fixed $N_{\rm AV}$.

In implementation, drawing
$I\sim\operatorname{Cat}(\{\bar W_{i,y^\star}\})$ gives
\begin{equation}
 \mathbb{E}_{I}
 \bigl[\ell_{\Theta}(X_I,y_I)\bigr]
 =
 \sum_i
 \bar W_{i,y^\star}\ell_{\Theta}(X_i,y_i).
 \label{eq:categorical_unbiased}
\end{equation}
Thus categorical neighbor sampling is an unbiased Monte Carlo estimator of
the fully weighted vicinal objective characterized above, while introducing
the ordinary stochastic variance associated with minibatch optimization.

\section{Ablation Study on Purified CVAE Latent-Space Learning}
\label{app:cvae_ablation}
We compare latent diffusion pipelines using CVAEs trained
under different vicinal-sampling and auxiliary-regularization
settings on Steering Angle. The reported metrics evaluate
the quality and label consistency of the final generated images.

\begin{figure}[!htbp]
    \centering
    \includegraphics[
        width=0.6\columnwidth
    ]{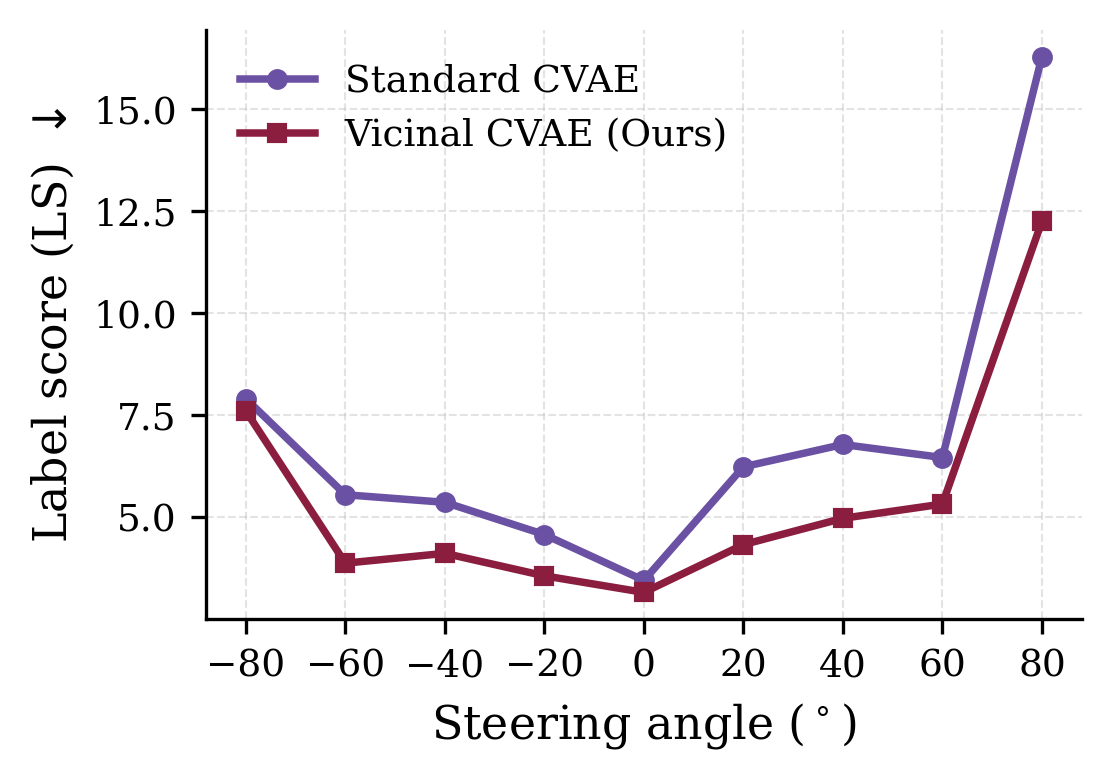}
    \caption{
    Label consistency comparison between the standard CVAE and the proposed
Vicinal CVAE on Steering Angle. Vicinal sampling achieves lower Label Scores
across most conditions, particularly in sparse extreme-angle regions.
    }
    \label{fig:sa64_ls_compare}
\end{figure}

\begin{table}[H]
    \centering

    \caption{Ablation of vicinal learning and auxiliary regularization
    in the $64\times64$ CVAE on Steering Angle.}
    \label{tab:cvae_ablation}

    \setlength{\tabcolsep}{5pt}
    \renewcommand{\arraystretch}{1.15}

    \setlength{\arrayrulewidth}{0.25pt}

    \setlength{\dashlinedash}{2.2pt}
    \setlength{\dashlinegap}{2.5pt}

    \small

    \resizebox{\columnwidth}{!}{%
    \begin{tabular}{c:l:ccc}

        \hline

        \textbf{Dataset}
        &
        \textbf{Configuration}
        &
        \textbf{SFID}$\downarrow$
        &
        \textbf{LS}$\downarrow$
        &
        \textbf{Diversity}$\uparrow$
        \\

        \hline

        \multirow{3}{*}{Steering Angle}
        &
        CVAE w/o vicinal
        &
        0.260
        &
        6.890
        &
        1.065
        \\

        &
        CVAE w/ vicinal
        &
        0.255
        &
        5.749
        &
        1.023
        \\

        &
        CVAE w/ vicinal \& $\mathcal{L}_{\mathrm{aux}}$
        &
        0.257
        &
        5.504
        &
        1.072
        \\

        \hline

    \end{tabular}%
    }

\end{table}

\section{Matrix Score, Probability-Flow ODE, and Model Guidance}
\label{app:matrix_diffusion}

Following the notation in the main text, we omit the superscript
$v$ from the vicinal latent distributions and posterior means. Hence, all
conditional densities and posterior means in this appendix are understood
with respect to the vicinal latent distribution induced by the Stage-II
sampling procedure.

For compactness, write
\[
K = K_{\sigma}(y),
\qquad
z = z_{\sigma}.
\]
For every fixed requested condition $y$, we assume that
$K_{\sigma}(y)$ is continuously differentiable in $\sigma$ and satisfies
\begin{equation}
 K_{\sigma}(y) \succ 0,
 \qquad
 K'_{\sigma}(y) \succeq 0,
 \qquad
 K_0(y)=0,
 \qquad
 \sigma>0.
 \label{eq:matrix_covariance_assumptions}
\end{equation}
All matrix square roots below denote principal symmetric square roots.
For the diagonal covariance schedule used in the main text,
\[
 K_{\sigma}(y)
 =
 \sigma^2 I
 +
 \lambda_{\rm cov}\sigma
 \operatorname{Diag}\!\bigl(a_{\nu}(y)\bigr),
\]
these conditions hold for $\lambda_{\rm cov}\geq0$ and
$a_{\nu}(y)>0$. Because $K_{\sigma}(y)$ is diagonal, all matrix
functions reduce to coordinate-wise operations.

\subsection{Matrix score and probability-flow ODE}

The corrupted density is the Gaussian convolution
\begin{equation}
 p_{\sigma}(z\mid y)
 =
 \int
 p_0(z_0\mid y)
 \mathcal{N}(z;z_0,K)\,dz_0.
 \label{eq:gaussian_convolution}
\end{equation}
Define the exact clean-latent posterior mean associated with this vicinal
conditional distribution as
\begin{equation}
 D^{*}(z,\sigma,y)
 =
 \mathbb{E}
 \left[
 z_0
 \mid
 z,\sigma,y
 \right].
 \label{eq:appendix_exact_posterior_mean}
\end{equation}

Since $K$ is independent of $z$, differentiation under the integral gives
\begin{align}
 \nabla_z p_{\sigma}(z\mid y)
 &=
 \int
 p_0(z_0\mid y)
 \mathcal{N}(z;z_0,K)
 K^{-1}(z_0-z)\,dz_0
 \nonumber\\
 &=
 p_{\sigma}(z\mid y)
 K^{-1}
 \left(
 D^{*}(z,\sigma,y)-z
 \right).
 \label{eq:score_derivation}
\end{align}
Dividing by $p_{\sigma}(z\mid y)$ yields the matrix-form Tweedie
identity
\begin{equation}
 \nabla_z\log p_{\sigma}(z\mid y)
 =
 K_{\sigma}(y)^{-1}
 \left(
 D^{*}(z,\sigma,y)-z
 \right),
 \label{eq:matrix_tweedie_appendix}
\end{equation}
This matrix-form Tweedie identity will be used below to derive
Eq.~\eqref{eq:latent_pfode}.

An additive Gaussian path with covariance $K_{\sigma}(y)$ is generated by
the forward SDE
\begin{equation}
 dz_{\sigma}
 =
 \left(K'_{\sigma}(y)\right)^{1/2}
 dW_{\sigma}.
 \label{eq:forward_matrix_sde}
\end{equation}
Indeed, integrating the instantaneous covariance from $0$ to $\sigma$
gives
\begin{equation}
 \int_0^{\sigma}
 K'_u(y)\,du
 =
 K_{\sigma}(y)-K_0(y)
 =
 K_{\sigma}(y).
 \label{eq:integrated_covariance}
\end{equation}

Because $K'_{\sigma}(y)$ is independent of $z$, the corresponding
Fokker--Planck equation is
\begin{equation}
 \partial_{\sigma}p_{\sigma}
 =
 \tfrac12
 \nabla_z\!\cdot
 \left(
 K'_{\sigma}(y)
 \nabla_z p_{\sigma}
 \right).
 \label{eq:fokker_planck}
\end{equation}
Consider the deterministic velocity field
\begin{equation}
 v_{\sigma}(z,y)
 =
 -\tfrac12
 K'_{\sigma}(y)
 \nabla_z\log p_{\sigma}(z\mid y).
 \label{eq:pfode_velocity}
\end{equation}
It satisfies
\begin{align}
 -\nabla_z\!\cdot
 \left(
 p_{\sigma}v_{\sigma}
 \right)
 &=
 \tfrac12
 \nabla_z\!\cdot
 \left(
 p_{\sigma}
 K'_{\sigma}(y)
 \nabla_z\log p_{\sigma}
 \right)
 \nonumber\\
 &=
 \tfrac12
 \nabla_z\!\cdot
 \left(
 K'_{\sigma}(y)
 \nabla_zp_{\sigma}
 \right),
 \label{eq:continuity_fokker_planck_equivalence}
\end{align}
which agrees with Eq.~\eqref{eq:fokker_planck}.

Substituting Eq.~\eqref{eq:matrix_tweedie_appendix} into
Eq.~\eqref{eq:pfode_velocity} gives
\begin{equation}
 v_{\sigma}(z,y)
 =
 \tfrac12
 K'_{\sigma}(y)
 K_{\sigma}(y)^{-1}
 \left(
 z-D^{*}(z,\sigma,y)
 \right).
 \label{eq:pfode_derivation}
\end{equation}
Therefore, the exact probability-flow ODE associated with the vicinal
conditional latent distribution is
\begin{equation}
 \frac{dz_{\sigma}}{d\sigma}
 =
 B_{\sigma,y}
 \left(
 z_{\sigma}
 -
 D^{*}(z_{\sigma},\sigma,y)
 \right),
 \quad
 B_{\sigma,y}
 =
 \tfrac12
 K'_{\sigma}(y)K_{\sigma}(y)^{-1}.
 \label{eq:appendix_exact_latent_pfode}
\end{equation}
This establishes Eq.~\eqref{eq:latent_pfode} in the main text.

Replacing $D^{*}$ by the learned model-guided denoiser yields the
operational guided sampling field in
Eq.~\eqref{eq:latent_guided_ode}. This replacement does not, in general,
preserve the unguided vicinal conditional marginals exactly.

To clarify the large-noise boundary approximation, define
\begin{equation}
 m_0(y)
 =
 \mathbb{E}[z_0\mid y],
 \qquad
 S_0(y)
 =
 \operatorname{Cov}(z_0\mid y).
 \label{eq:clean_latent_moments}
\end{equation}
The corrupted latent has mean $m_0(y)$ and covariance
$S_0(y)+K_{\sigma}(y)$. Consequently, if the scaled clean latents are
approximately centered and
\begin{equation}
 \left\|
 K_{\sigma_{\max}}(y)^{-1/2}
 S_0(y)
 K_{\sigma_{\max}}(y)^{-1/2}
 \right\|_2
 \ll 1,
 \label{eq:boundary_dominance_condition}
\end{equation}
then the additive Gaussian component dominates the clean-latent
variability, motivating
\begin{equation}
 p_{\sigma_{\max}}(z\mid y)
 \approx
 \mathcal{N}
 \left(
 0,
 K_{\sigma_{\max}}(y)
 \right).
 \label{eq:appendix_gaussian_boundary}
\end{equation}

\subsection{Model-guidance target and normalized branchwise regression}

For model guidance, the conditional and null-conditioned teacher branches
are evaluated using the same noisy state, noise level, and requested
covariance geometry $K_{\sigma}(y)$. We write
\begin{equation}
 \Delta_{\bar\theta}^{\mathrm{ref}}
 =
 D_{\bar\theta}(z_{\sigma},\sigma,y)
 -
 D_{\bar\theta}^{(y)}
 (z_{\sigma},\sigma,\emptyset).
 \label{eq:reference_guidance_appendix}
\end{equation}
The superscript $(y)$ indicates that the explicit condition embedding is
replaced by the null embedding $e_{\emptyset}$, while the covariance and
preconditioning geometry associated with the requested label $y$ remains
fixed.

Because the corruption geometry depends on $y$, the null-conditioned
branch is not identified with the exact denoiser of a globally
label-marginal noisy density. It is instead an operational reference
regression function under the fixed requested geometry. Consequently,
Eq.~\eqref{eq:mg_clean_prediction} defines the operational two-branch
teacher prediction used to construct the target
\begin{equation}
 z_0^{\mathrm{MG}}
 =
 z_0
 +
 w_{\mathrm{MG}}
 \operatorname{sg}
 \left[
 \Delta_{\bar\theta}^{\mathrm{ref}}
 \right].
 \label{eq:appendix_mg_target}
\end{equation}
This is equivalent to Eq.~\eqref{eq:mg_target}.

During each online-model update, the teacher is fixed.
We analyze ideal branch functions with the label representation
and normalization statistics held fixed.
Let $(Z_0,Z_\sigma,Y)$ follow the corresponding Stage-II
sampling distributions. For $\sigma>0$, define
\[
U=C_{\rm in}(\sigma,Y)Z_\sigma,
\qquad T=Z_0^{\rm MG},
\]
\[
H=C_{\rm out}(\sigma,Y)^{-1}
\bigl[T-C_{\rm skip}(\sigma,Y)Z_\sigma\bigr].
\]
Assume $0<p_{\rm drop}<1$, distinguishable conditional
and null embeddings, finite weighted second moments,
and that $e_Y$ determines $Y$ on the training support.

Let $f_c(u,\sigma,e)$ and $f_\emptyset(u,\sigma)$
denote the normalized outputs of $F_\theta$.
Output normalization gives the conditional loss metric
$C_{\rm out}$ and the null loss metric
$(s_z^{-2}+\sigma^{-2})^{1/2}C_{\rm out}^2$.
The conditional metric is determined by the available inputs,
and the scalar null factor depends only on $\sigma$.
Hence, over freely varying branch functions,
the population optima are unique almost surely:
\[
f_c^{\rm opt}(u,\sigma,e)
=
\mathbb E_c[H\mid U=u,\sigma,e_Y=e],
\]
\[
\begin{aligned}
f_\emptyset^{\rm opt}(u,\sigma)
={}&
\left(
\mathbb E_\emptyset
[C_{\rm out}^2\mid U=u,\sigma]
\right)^{-1}\\
&\cdot
\mathbb E_\emptyset
[C_{\rm out}^2H\mid U=u,\sigma].
\end{aligned}
\]
Here $C_{\rm out}=C_{\rm out}(\sigma,Y)$ inside
the expectations.

Returning through Eq.~(8), the null prediction is
\[
\begin{aligned}
D_\emptyset^{\rm opt,(y)}(z,\sigma)
={}&C_{\rm skip}(\sigma,y)z\\
&+C_{\rm out}(\sigma,y)
f_\emptyset^{\rm opt}
\bigl(C_{\rm in}(\sigma,y)z,\sigma\bigr).
\end{aligned}
\]
Under the stated information-sufficiency assumption,
the conditional denoiser optimum reduces to
\[
D_c^{\rm opt}(z,\sigma,y)
=
D^*(z,\sigma,y)
+w_{\rm MG}\Delta_{\bar\theta}^{\rm ref},
\]
where $w_{\rm MG}$ is its active value,
including zero during warm-up.

Attaining both function-space optima requires joint
realizability by the shared network; finite capacity
and optimization may introduce error.
This analysis does not establish convergence of the
coupled online--EMA dynamics.
The null prediction remains an operational reference,
not necessarily the exact global label-marginal denoiser.

\section{Matrix Preconditioning and Loss Weight}
\label{app:preconditioning}

Consider the local Gaussian preconditioning approximation
\[
 z_0\sim\mathcal{N}(0,S_z),
 \qquad
 z=z_0+\varepsilon,
 \qquad
 \varepsilon\sim\mathcal{N}(0,K_{\sigma}),
\]
where $S_z$ and $K_{\sigma}$ are diagonal and therefore commute.
Gaussian conditioning gives
\begin{equation}
 \mathbb{E}[z_0\mid z]
 =
 S_z(S_z+K_{\sigma})^{-1}z,
 \label{eq:linear_mmse}
\end{equation}
which motivates $C_{\rm skip}$ in
Eq.~\eqref{eq:preconditioning_coefficients}. The input standard deviation is
$(S_z+K_{\sigma})^{1/2}$, so
$C_{\rm in}=(S_z+K_{\sigma})^{-1/2}$ normalizes each coordinate.

The covariance of the residual $z_0-C_{\rm skip}z$ is
\begin{align}
 \operatorname{Cov}(z_0-C_{\rm skip}z)
 &=
 S_z-S_z(S_z+K_{\sigma})^{-1}S_z
 \nonumber\\
 &=
 S_zK_{\sigma}(S_z+K_{\sigma})^{-1}
 =
 C_{\rm out}^{2}.
 \label{eq:residual_covariance}
\end{align}
Consequently,
$C_{\rm out}^{-1}(z_0-C_{\rm skip}z)$ has identity covariance under the local
Gaussian approximation. Whitening the clean-prediction residual therefore
uses its inverse covariance,
\begin{equation}
 C_{\rm out}^{-2}
 =
 (S_z+K_{\sigma})(S_zK_{\sigma})^{-1}
 =
 S_z^{-1}+K_{\sigma}^{-1},
 \label{eq:weight_derivation}
\end{equation}
setting $S_z=s_z^2I$ as in the main text gives
$
C_{\rm out}^{-2}
=
s_z^{-2}I+K_\sigma^{-1},
$
which recovers $W_{\sigma,y}$ in
Eq.~\eqref{eq:precision_weight}. Stage-II uses the square root of the corruption precision
as its squared-error metric, whereas distillation uses
the full precision.
For retained conditions, their Euclidean-norm multipliers
are $W_{\sigma,y}^{1/4}$ and $W_{\sigma,y}^{1/2}$,
respectively.
For dropped Stage-II conditions, the multiplier is
$(s_z^{-2}+\sigma^{-2})^{1/4}I$.
\section{Endpoint-Consistent Step Halving}
\label{app:distillation_proof}

We prove endpoint consistency for the two transition rules in our
hybrid distillation schedule:
\[
32
\xrightarrow{\mathrm{Euler}}
16
\xrightarrow{\mathrm{Euler}}
8
\xrightarrow{\mathrm{covariance\ ratio}}
4.
\]
Teacher subscripts denote rollout schedules;
stages inherit online/EMA checkpoint states. The EMA teacher is updated from
the student every 10 optimization steps. All results condition
on the current teacher, held fixed during target construction.

\subsection{Euler-Based Distillation}

For the proof of Theorem~\ref{thm:endpoint_consistency}, let
\[
B_k
=
\tfrac12K'_{\sigma_k}(y)K_{\sigma_k}(y)^{-1},
\qquad
\Delta_k=\sigma_{k+2}-\sigma_k<0,
\]
where $N\in\{32,16\}$ and
$k\in\{0,2,\ldots,N-2\}$.
For $\sigma_k>0$, Eq.~\eqref{eq:latent_covariance} implies that
$K_{\sigma_k}(y)$ and $K'_{\sigma_k}(y)$ are positive diagonal
operators, and hence $B_k$ is nonsingular.

\begin{proof}[Proof of Theorem~\ref{thm:endpoint_consistency}]
For an arbitrary clean-latent prediction $D$, the coarse Euler map is
\begin{equation}
\Psi_k^{\mathrm E}(D)
=
z_{\sigma_k}
+
\Delta_k B_k
\left(z_{\sigma_k}-D\right).
\label{eq:coarse_euler}
\end{equation}
Solving
$\Psi_k^{\mathrm E}(D)=z_{\sigma_{k+2}}^T$
gives
\[
D
=
z_{\sigma_k}
-
B_k^{-1}
\frac{z_{\sigma_{k+2}}^T-z_{\sigma_k}}{\Delta_k}
=
D_k^\dagger.
\]
The solution is unique because $\Delta_k\neq0$ and $B_k$ is
nonsingular. Moreover,
\[
B_k^{-1}
=
2K_{\sigma_k}(y)
\bigl(K'_{\sigma_k}(y)\bigr)^{-1},
\]
which gives Eq.~\eqref{eq:distilled_clean_target}. Direct substitution
then yields
$\Psi_k^{\mathrm E}(D_k^\dagger)=z_{\sigma_{k+2}}^T$.
\end{proof}


Fix $y$ and a positive-noise interval bounded away from zero.
Hold the current EMA teacher fixed, and assume that its
vector field and first derivatives are bounded on the
relevant state region. When all three teacher noise levels
lie in this interval, the two Euler transitions give
\[
\frac{z^T_{\sigma_{k+2}}-z_{\sigma_k}}{\Delta_k}
=
B_k\bigl[
z_{\sigma_k}-D_{\bar\theta_N}(z_{\sigma_k},\sigma_k,y)
\bigr]
+O(|\Delta_k|).
\]
Since $B_k^{-1}$ is bounded on this interval, substitution
into the Euler target gives
\[
D_k^\dagger
=
D_{\bar\theta_N}(z_{\sigma_k},\sigma_k,y)
+O(|\Delta_k|).
\]
Thus, refinement within this interval recovers the current
teacher prediction. This argument excludes zero-noise
terminal steps; their exact algebraic endpoint matching
remains valid.

\subsection{Final Covariance-Ratio Distillation}

For the $8\rightarrow4$ stage, define
\[
R_k
=
\left(
K_{\sigma_{k+2}}(y)
K_{\sigma_k}(y)^{-1}
\right)^{1/2}.
\]
Because $K_\sigma(y)$ is diagonal, all square roots and inverses are
evaluated elementwise.

\begin{proof}[Proof of Theorem~\ref{thm:ratio_endpoint_consistency}]
The corresponding coarse transition is
\begin{equation}
\begin{aligned}
\Psi_k^{\mathrm R}(D)
&=
D+R_k(z_{\sigma_k}-D)
\\
&=
R_kz_{\sigma_k}+(I-R_k)D.
\end{aligned}
\label{eq:coarse_ratio_map}
\end{equation}
Solving
$\Psi_k^{\mathrm R}(D)=z_{\sigma_{k+2}}^T$
gives
\[
D
=
(I-R_k)^{-1}
\left(
z_{\sigma_{k+2}}^T-R_kz_{\sigma_k}
\right)
=
D_{k,\mathrm{ratio}}^\dagger.
\]
The solution is unique by the nonsingularity of $I-R_k$, and direct
substitution proves
\[
\Psi_k^{\mathrm R}
\left(D_{k,\mathrm{ratio}}^\dagger\right)
=
z_{\sigma_{k+2}}^T.
\]
\end{proof}

Since the diagonal entries of $K_\sigma(y)$ decrease when $\sigma$
decreases along the reverse schedule, the entries of $R_k$ lie in
$[0,1)$; hence $I-R_k$ is nonsingular.

Fix $y$ and restrict $\sigma_a,\sigma_b$ to a positive-noise
interval bounded away from zero.
Let $\sigma_b=\sigma_a+\Delta\sigma$.Taylor expansion gives
\begin{equation}
\begin{aligned}
R_{a\rightarrow b}(y)
&=
\bigl(
K_{\sigma_b}(y)K_{\sigma_a}(y)^{-1}
\bigr)^{1/2}
\\
&=
I
+
\tfrac{1}{2}\Delta\sigma\,
K'_{\sigma_a}(y)K_{\sigma_a}(y)^{-1}
+
O(\Delta\sigma^2)
\\
&=
I
+
\Delta\sigma B_{\sigma_a,y}
+
O(\Delta\sigma^2).
\end{aligned}
\label{eq:ratio_local_expansion}
\end{equation}

Consequently,
\begin{equation}
\begin{aligned}
\Psi_{\vartheta}
\bigl(
z_{\sigma_a};
\sigma_a\!\rightarrow\!\sigma_b,y
\bigr)
&\\
&\hspace{-5em}=
z_{\sigma_a}
+
\Delta\sigma B_{\sigma_a,y}
\bigl[
z_{\sigma_a}
-
D_{\vartheta}(z_{\sigma_a},\sigma_a,y)
\bigr]
+
O(\Delta\sigma^2).
\end{aligned}
\label{eq:ratio_transition_consistency}
\end{equation}
giving first-order local consistency with the field defined
by $D_\vartheta$ away from zero noise.
At the terminal step, $R_{a\to0}(y)=0$, so the update
directly returns the denoiser prediction.

Finally, since $W_{\sigma_k,y}\succ0$ in
Eq.~\eqref{eq:distillation_loss}, it reweights the coordinate-wise
optimization errors without altering the pointwise zero-loss targets
$D_k^\dagger$ and $D_{k,\mathrm{ratio}}^\dagger$.
The distilled schedules are nested subsets of the original
32-step schedule:
\[
\mathcal S_{16}
=
\{\sigma_{2j}^{(32)}\}_{j=0}^{16},
\qquad
\mathcal S_{8}
=
\{\sigma_{4j}^{(32)}\}_{j=0}^{8},
\qquad
\mathcal S_{4}
=
\{\sigma_{8j}^{(32)}\}_{j=0}^{4},
\]
where $\sigma_j^{(32)}$ denotes the $j$-th noise level of the
32-step schedule. At each stage, one student transition spans two teacher transitions
and reproduces their endpoint when its prediction matches
the constructed target exactly.
The final four-step sampler applies four covariance-ratio transitions,
each requiring a single conditional denoiser evaluation. It therefore
requires four conditional NFEs in total, followed by one CVAE decoding
operation.



\section{Overall Training and Sampling Procedure}
\label{app:algorithm}

\begingroup
\setlength{\intextsep}{5pt}

\begin{algorithm}[H]
\caption{Purified Vicinal Latent-Space Learning}
\label{alg:latent_training}
\small
\begin{algorithmic}[1]

\REQUIRE
Data $\mathcal{S}=\{(x_i,y_i)\}_{i=1}^{N_{\rm data}}$;
frozen regressor $R_{\omega_R}$; vicinity size $N_{\rm AV}$.

\ENSURE
Frozen encoder $q_\phi$ and decoder $G_\psi$.

\STATE Initialize $Q_{\rm bad}\leftarrow\emptyset$.

\FOR{each CVAE training iteration}

    \STATE Sample $y^\star$; compute its neighborhood and weights
    via Eqs.~\eqref{eq:adaptive_radius}
    and \eqref{eq:vicinal_weight};
    draw $i\sim\operatorname{Cat}(\{\bar W_{i,y^\star}\})$.

    \STATE Encode $z^{(i)}\sim q_\phi(z\mid x_i,y_i)$;
    reconstruct $\widehat{x}_i=G_\psi(z^{(i)},y_i)$.

    \STATE Update $D_\omega$ via
    Eq.~\eqref{eq:nda_discriminator}.

    \STATE Update $(\phi,\psi)$ via
    Eq.~\eqref{eq:cvae_total}.

    \STATE Periodically refresh $Q_{\rm bad}$ with
    the worst NIQE-ranked reconstructions.

\ENDFOR

\STATE Freeze $q_\phi$ and $G_\psi$.

\RETURN $q_\phi$ and $G_\psi$.

\end{algorithmic}
\end{algorithm}

\endgroup
\begingroup
\setlength{\intextsep}{5pt}

\begin{algorithm}[H]
\caption{Matrix-Form Model-Guided Diffusion Training}
\label{alg:mg_training}
\small
\begin{algorithmic}[1]

\REQUIRE
Data $\mathcal{S}$; frozen encoder $q_\phi$ and covariance
network $h_\nu$; noise schedule;
$p_{\rm drop}$, $w_{\rm MG}$, $s_{\rm warm}$, $\beta_{\rm EMA}$.

\ENSURE
Online/EMA checkpoint and EMA denoiser $D_{\theta_N}$.

\STATE Initialize $D_\theta$ and EMA teacher
$D_{\bar\theta}\leftarrow D_\theta$.

\FOR{each diffusion training iteration $s$}

    \STATE Sample $(y^\star,i)$ via
    Section~\ref{subsec:cvae}; set $y=y^\star$.

    \STATE Construct $z_0$ via Eq.~\eqref{eq:clean_latent}.

    \STATE Sample $\sigma$ and a dropout mask; form $z_\sigma$.

    \IF{$s\leq s_{\rm warm}$}
        \STATE Set $z_0^{\rm MG}=z_0$.
    \ELSE
        \STATE Construct $z_0^{\rm MG}$
        via Eq.~\eqref{eq:mg_target}.
    \ENDIF

    \STATE Update $\theta$ via Eq.~\eqref{eq:model_guided_loss} using the same mask.

    \STATE EMA update:
    $\bar\theta\leftarrow
    \beta_{\rm EMA}\bar\theta+(1-\beta_{\rm EMA})\theta$.

\ENDFOR

\STATE Set $D_{\theta_N}\leftarrow D_{\bar\theta}$.
\RETURN $D_{\theta_N}$ and $(\theta,\bar\theta)$.

\end{algorithmic}
\end{algorithm}

\endgroup

\begingroup
\setlength{\intextsep}{5pt}

\begin{algorithm}[H]
\caption{Hybrid Step-Halving Distillation}
\label{alg:progressive_distillation}
\small
\begin{algorithmic}[1]

\REQUIRE
Data $\mathcal{S}$; frozen encoder $q_\phi$ and covariance
network $h_\nu$; Stage-II online/EMA checkpoint;
$N=32$ and its noise schedule.

\ENSURE
Four-step EMA denoiser $D_{\theta_4}$.

\STATE Start from the $N$-step checkpoint.

\FOR{$M=N,N/2,\ldots,16$}

    \STATE Restore the student and EMA teacher from the
preceding checkpoint's online and EMA states.

    \FOR{each distillation iteration}

        \STATE Sample $(z_0,y)$ and $k\in\{0,2,\ldots,M-2\}$;
        construct $z_{\sigma_k}$
        via Eq.~\eqref{eq:forward_perturbation}.

        \STATE Compute the teacher endpoint
        via Eq.~\eqref{eq:teacher_endpoint}.

        \STATE Compute $D_k^\dagger$
        via Eq.~\eqref{eq:distilled_clean_target}.

        \STATE Update $D_{\theta_{M/2}}$
        via Eq.~\eqref{eq:distillation_loss}; update EMA.

    \ENDFOR

    \STATE Carry both online and EMA states to the next stage.

\ENDFOR

\STATE Load eight-step online/EMA states; teacher rollouts:
Eq.~\eqref{eq:covariance_ratio_transition}.

\FOR{each final distillation iteration}

    \STATE Sample $(z_0,y)$ and $k\in\{0,2,4,6\}$;
    construct $z_{\sigma_k}$
    via Eq.~\eqref{eq:forward_perturbation}.

    \STATE Compute the teacher endpoint
    via Eq.~\eqref{eq:ratio_teacher_endpoint}.

    \STATE Compute $D_{k,\mathrm{ratio}}^\dagger$
    via Eq.~\eqref{eq:ratio_distillation_target}.

    \STATE Update $D_{\theta_4}$ via
    Eq.~\eqref{eq:distillation_loss}
    with target $D_{k,\mathrm{ratio}}^\dagger$; update EMA.

\ENDFOR

\RETURN the final EMA denoiser as $D_{\theta_4}$.

\end{algorithmic}
\end{algorithm}

\endgroup

\end{document}